
\documentclass{article}

\usepackage{times}
\usepackage{graphicx} 
\usepackage{subfigure} 

\usepackage{natbib}

\usepackage{algorithm}
\usepackage{algorithmic}

\usepackage{hyperref}



\usepackage{amsmath}
\usepackage{amsfonts}
\usepackage{amssymb}
\usepackage{wrapfig}

\usepackage{enumitem}

\usepackage{amsthm}
\newtheorem{lem}{Lemma}

\usepackage[accepted]{icml2016}

\icmltitlerunning{Nonlinear Statistical Learning with Truncated Gaussian Graphical Models}

\begin{document} 
	
	\twocolumn[
	\icmltitle{Nonlinear Statistical Learning with Truncated Gaussian Graphical Models}
	
	\icmlauthor{Qinliang Su, Xuejun Liao, Changyou Chen, Lawrence Carin}{QS15, XJLIAO, CC448, LCARIN@duke.edu}
	\icmladdress{Duke University, Durham, NC 27519, USA}
	
	
	\vskip 0.3in
	]
	
	\begin{abstract}
		We introduce the \emph{truncated Gaussian graphical model (TGGM)} as a novel framework for designing statistical models for nonlinear learning. A TGGM is a Gaussian graphical model (GGM) with a subset of variables truncated to be nonnegative. The truncated variables are assumed latent and integrated out to induce a marginal model. We show that the variables in the marginal model are non-Gaussian distributed and their 
		expected relations are nonlinear. We use expectation-maximization to break the inference of the nonlinear model into a sequence of TGGM inference problems, each of which is efficiently solved by using the properties and numerical methods of multivariate Gaussian distributions. 
		We use the TGGM to design models for nonlinear regression and classification, with the performances of these models demonstrated on extensive benchmark datasets and compared to state-of-the-art competing results.
	\end{abstract}
	
	\section{Introduction}
	Graphical models, which use graph-based visualization to represent statistical dependencies among random variables, have been widely used to construct multivariate statistical models \cite{koller2009probabilistic}. A sophisticated model can generally represent richer statistical dependencies, but the inference can quickly become intractable as the model's complexity increases. A simple model, on the contrary,  is easy to infer, but its representational power is limited. 
	
	To balance representational versatility and inferential tractability, latent variables are often added into the graphical model to obtain a tractable joint probability distribution which, when the latent variables are integrated out, induces a complicated and expressive marginal distribution over the target variables, i.e., the variables of interest \cite{galbraith2002analysis}. Since the complexity of these models is induced by integration, expectation-maximization (EM) \cite{EM77Dempster} can be employed to facilitate inference. The approach of EM is to break the original problem of inferring the marginal distribution into a sequence of easier problems, each of which is to infer the expected logarithmic  joint distribution, where the expectation is taken over the terms of latent variables in the logarithmic domain, using the information from the previous iteration of this sequential procedure. 
	The restricted Boltzmann machine (RBM) \cite{hinton2002training} and sigmoid belief network (SBN) \cite{neal1992connectionist, gan2015learning}, as well as the deep networks built upon them \cite{salakhutdinov2009deep,hinton2006fast}, are good examples of using latent variables to enhance modeling versatility while at the same time admitting tractable statistical inference.
	
	Gaussian graphical models (GGM) constitute a subset of graphical models that have found successful application in a diverse range of areas \cite{honorio2009sparse,liu2013learning,oh2014inference, meng2014learning, su2015convergence, su2015distributed}.  The popularity of GGM may partially be attributed to the abundant applications for which the data are Gaussian distributed or approximately so, and partially attributed to the attractive properties of the multivariate Gaussian distribution which facilitate inference. However, there are many applications where the data are distributed in a way that heavily deviates from Gaussianity, and the GGM may not reveal meaningful statistical dependencies underlying the data. 
	
	What is worse, adding latent variables into a GGM does not induce enhanced marginal versatility for the target variables, as it typically does in other graphical models; this is so because the marginals of a multivariate Gaussian distribution are still Gaussian. In addition, the conditional mean of $\mathbf{y}$ given $\mathbf{x}$ is always linear in $\mathbf{x}$ whenever $(\mathbf{y},\mathbf{x})$ are jointly Gaussian. In this sense, a GGM is inherently a linear model no matter how many latent variables are added. 
	
	To overcome the linearity of GGMs, \citeauthor{frey1997continuous} (\citeyear{frey1997continuous}), \citeauthor{Hinton1997rectifierGaussian} (\citeyear{Hinton1997rectifierGaussian}), and \citeauthor{frey1999variational} (\citeyear{frey1999variational}) proposed to apply nonlinear transformations to Gaussian hidden variables.  More recently, a deep latent Gaussian model was proposed in \cite{rezende2014stochastic}, in which each hidden layer is connected to the output layer through a neural network with Gaussian noise. In these models, non-linear transforms are applied to Gaussian variables to obtain nonlinearity at the output layer. The non-linear transforms, however, destroy the nice structure of a GGM, such as the quadratic energy function and a simple conditional dependency structure, rendering it difficult to obtain analytic learning rules and, as a result, one has to resort to less efficient sampling-based inference.
	
	In this paper, we introduce a novel approach to inducing nonlinearity in a GGM. The new approach is simple: it adds latent variables into a GGM and truncates them below zero so that they are nonnegative. We term the resulting framework as \emph{truncated Gaussian graphical model (TGGM)}. Although simple, the truncation leads to a remarkable result: after the truncated latent variables are integrated out, the target variables are no longer Gaussian distributed and their expected relations are no longer linear. Therefore the TGGM induces a nonlinear marginal model for the target variables, forming a striking contrast to the GGM. It should be emphasized that the approach proposed here is different from those in \cite{socci1998rectified, downs1999nonnegative}, which constrain the target (observed) variables to be nonnegative, without using latent variables; the nonnegative constraint in those approaches relaxes the convex function in the Gaussian distribution to a non-convex energy function that admits multimodal distributions.
	
	The foremost advantage of the TGGM-induced nonlinear model over previous nonlinear models is the ease and efficiency with which inference can be performed. The advantage is attributed to the following two facts. First, as the nonlinear model is induced from a TGGM by integrating out the latent variables, EM can be used to break the inference into a sequence of TGGM inference problems. Second, as the truncation in a TGGM does not alter the quadratic energy function or the conditional dependency structure of the GGM, it is possible for a TGGM inference algorithm to utilize many well-studied properties of multivariate Gaussian distributions and the associated numerical methods \cite{johnson1994continuous,genz2009computation}. A second important advantage is that the conditional dependency structure of a TGGM is uniquely encoded in the precision matrix (or inverse covariance matrix) of the corresponding GGM (before the truncation is performed). By working with the precision matrix, one can conveniently design diverse structures and construct abundant types of nonlinear statistical models to fit the application at hand.   
	
	We provide several examples of leveraging the TGGM to solve machine-learning problems. In the first, we use the TGGM to construct a nonlinear regression model that can be understood as a probabilistic version of the rectified linear unit (ReLU) neural network \cite{glorot2011deep}. In the second, 
	we solve multi-class classification by using the multinomial probit link function \cite{albert1993bayesian} to transform the continuous target variables of a TGGM into categorical variables. Our main focus in the first two examples is on shallow structures, with one latent (hidden) layer of nonlinear units used in a TGGM. In the third example, we consider extensions to deep structures, by modifying the blocks in the precision matrix that are related to latent truncated variables. We use EM as the primary inference method, with the variational Bayesian (VB) approximation used for multivariate truncated Gaussian distributions. The performances of the TGGM models are demonstrated on extensive benchmark datasets and compared to state-of-the-art competing results. 
	\vspace{-3mm}
	\section{Nonlinearity from Truncated Gaussian Graphical Models (TGGMs)}
	Let $\mathbf{y}\in\mathbb{R}^n$ and $\mathbf{h}\in\mathbb{R}^m$ respectively denote the target and latent variables of a TGGM, and we use ${\mathbf{x}}$ to denote the input variable. The TGGM is defined by the following joint probability density function
	\vspace{-2mm}
	\begin{eqnarray} \label{joint_pdf_y_h_x}
	&&\hspace{-1cm}p({\mathbf{y}}, {\mathbf{h}}\left|{\mathbf{x}}\right.) 
	\cr&&\hspace{-1cm}={\mathcal{N}}\left({\mathbf{y}} \left| {\mathbf{W}}_1{\mathbf{h}} \!+\! {\mathbf{b}}_1, {\mathbf{P}}_1^{-1} \right. \!\right){\mathcal{N}}_T\! \left({\mathbf{h}}\left|{\mathbf{W}}_0{\mathbf{x}}\!+\! {\mathbf{b}}_0, {\mathbf{P}}_0^{-1} \right.\! \right) ,
	\end{eqnarray}
	where ${\mathcal{N}}(\mathbf{x}\left|{\boldsymbol{\mu}}, {\mathbf{P}}^{-1} \right.)$ is a multivariate Gaussian density of $\mathbf{x}$ with mean $\boldsymbol{\mu}$ and precision matrix $\mathbf{P}$ and ${\mathcal{N}_T}(\mathbf{x}\left|{\boldsymbol{\mu}}, {\mathbf{P}}^{-1} \right.)$ is the associated truncated density  defined as 
		\begin{eqnarray*}
\mathcal{N}_T(\mathbf{x}\left|\boldsymbol{\mu}, \mathbf{P}^{-1} \right.)\triangleq\frac{\mathcal{N}(\mathbf{x}\left|\boldsymbol{\mu}, \mathbf{P}^{-1} \right.)\mathbb{I}(\mathbf{x}\geq\mathbf{0})}{\int_{0}^{\infty}\mathcal{N}(\mathbf{z}\left|\boldsymbol{\mu}, \mathbf{P}^{-1} \right.)d\mathbf{z}},
\end{eqnarray*}
where $\mathbb{I}(\cdot)$ is an indicator function and $\int_0^{+\infty}d\mathbf{z}$ is  multiple integral. 
	The marginal TGGM model is defined by 
	\begin{align} \label{cond_pdf_y_x}
	&p({\mathbf{y}}\left.|{\mathbf{x}} \right.) = \int_0^{+\infty}{p({\mathbf{y}}, {\mathbf{h}}\left|{\mathbf{x}}\right.)d{\mathbf{h}}}.
	\end{align}
	To see how the truncation ${\mathbf{h}}\ge {\mathbf{0}}$ affects the marginal TGGM $p({\mathbf{y}}\left| {\mathbf{x}} \right. )$, we rewrite \eqref{joint_pdf_y_h_x} equivalently as
	\begin{eqnarray} \label{joint_pdf_y_h_x_reform}
	&&\hspace{-1.2cm}{p}({\mathbf{y}}, {\mathbf{h}} \left| {\mathbf{x}} \right.)
	\cr&&\hspace{-0.8cm}=\frac{\mathcal{N}({\mathbf{h}} |{\boldsymbol{\mu}}_{{\mathbf{h}}|{\mathbf{x}},{\mathbf{y}}}, {\boldsymbol{\Sigma}}_{{\mathbf{h}}|{\mathbf{x}},{\mathbf{y}}})
		{\mathcal{N}}({\mathbf{y}} | {\boldsymbol{\mu}}_{{\mathbf{y}}|{\mathbf{x}}}, {\boldsymbol{\Sigma}}_{{\mathbf{y}}|{\mathbf{x}}} ) \mathbb{I}({\mathbf{h}} \!\ge\! {\mathbf{0}})}{\int_0^{+\infty}{\mathcal{N}}\! \left({\mathbf{z}}\left|{\mathbf{W}}_0{\mathbf{x}}\!+\! {\mathbf{b}}_0, {\mathbf{P}}_0^{-1} \right.\! \right)d\mathbf{z}},
	\end{eqnarray}
	where  ${\boldsymbol{\mu}}_{{\mathbf{y}}|{\mathbf{x}}} = {\mathbf{W}}_1({\mathbf{W}}_0{\mathbf{x}} + {\mathbf{b}}_0)+{\mathbf{b}}_1$, ${\boldsymbol{\Sigma}}_{{\mathbf{y}}|{\mathbf{x}}} = {\mathbf{W}}_1{\mathbf{P}}_0^{-1}{\mathbf{W}}_1 + {\mathbf{P}}_1^{-1}$, ${\boldsymbol{\mu}}_{{\mathbf{h}}|{\mathbf{x}},{\mathbf{y}}}=({\mathbf{P}}_0+{\mathbf{W}}_1^T{\mathbf{P}}_1{\mathbf{W}}_1)^{-1}({\mathbf{P}}_0({\mathbf{W}}_0{\mathbf{x}}+{\mathbf{b}}_0)+{\mathbf{W}}_1^T{\mathbf{P}}_1({\mathbf{y}}-{\mathbf{b}}_1))$, and ${\boldsymbol{\Sigma}}_{{\mathbf{h}}|{\mathbf{x}},{\mathbf{y}}}=({\mathbf{P}}_0+{\mathbf{W}}_1^T{\mathbf{P}}_1{\mathbf{W}}_1)^{-1}$. 
	From \eqref{joint_pdf_y_h_x_reform} and \eqref{cond_pdf_y_x} follows
	\begin{eqnarray} \label{p_y_x_modul}
	&&\hspace{-1cm}p({\mathbf{y}}\left| {\mathbf{x}}\right.)
	\vspace{-3mm}
	\cr&&\hspace{-1cm}={\mathcal{N}}({\mathbf{y}} | {\boldsymbol{\mu}}_{{\mathbf{y}}|{\mathbf{x}}}, {\boldsymbol{\Sigma}}_{{\mathbf{y}}|{\mathbf{x}}} )\frac{\int_0^{+\infty}\mathcal{N}({\mathbf{h}} |{\boldsymbol{\mu}}_{{\mathbf{h}}|{\mathbf{x}},{\mathbf{y}}}, {\boldsymbol{\Sigma}}_{{\mathbf{h}}|{\mathbf{x}},{\mathbf{y}}})d\mathbf{h}}{\int_0^{+\infty}{\mathcal{N}}\! \left({\mathbf{h}}\left|{\mathbf{W}}_0{\mathbf{x}}\!+\! {\mathbf{b}}_0, {\mathbf{P}}_0^{-1} \right.\! \right)d\mathbf{h}}.
	\end{eqnarray}
	It is seen from \eqref{p_y_x_modul} that the target distribution induced by a TGGM consists of two multiplicative terms. The first term is a Gaussian distribution  induced by the associated GGM (for which $\mathbf{h}$ is not truncated). The second term modulates the Gaussian term into a more complicated and non-Gaussian distribution. As an example, one can verify that  (\ref{p_y_x_modul}) is a skewed normal with $w_0=b_0=b_1=0$, $p_0=1$, $p_1=2$, and $w_1=1/2$ \cite{mudholkar2000epsilon}.

	The modeling versatility of a TGGM is primarily influenced by $m$, the number of truncated latent variables, and  $\mathbf{P}_0$, which encodes marginal dependencies of these variables. With a proper choice of $m$ and $\mathbf{P}_0$, one can construct TGGM models to solve diverse  nonlinear learning tasks.  The nonlinearity induced by a TGGM is seen from the expression of $\mathbb{E}[\mathbf{y}|\mathbf{x}]$ which, using \eqref{joint_pdf_y_h_x}, is found to be 
	\begin{equation}\label{mean_y_given_x}
	{\mathbb{E}}[{\mathbf{y}}|{\mathbf{x}}] = {\mathbf{W}}_1{\mathbb{E}}[{\mathbf{h}}|{\mathbf{x}}] + {\mathbf{b}}_1, 
	\end{equation}
	where ${\mathbb{E}}[{\mathbf{h}}|{\mathbf{x}}]$ is the expectation with respect to $ {\mathcal{N}}_T\! \left({\mathbf{h}}\left|{\mathbf{W}}_0{\mathbf{x}}\!+\! {\mathbf{b}}_0, {\mathbf{P}}_0^{-1} \right.\! \right) $. Due to the truncation $\mathbf{h}\geq0$, the expectation ${\mathbb{E}}[{\mathbf{h}}|{\mathbf{x}}]$ is a nonlinear function of ${\mathbf{x}}$. By contrast, if ${\mathbf{h}}$ is not truncated, one has ${\mathbb{E}}[{\mathbf{h}}|{\mathbf{x}}]={\mathbf{W}}_0{\mathbf{x}} + {\mathbf{b}}_0$, which is a linear function of $\mathbf{x}$. Thus, a TGGM induces nonlinearity through the truncation of its latent variables.
	
	The nonlinearity can be controlled by adjusting  ${\mathbf{P}}_0$. For example, if we set ${\mathbf{P}}_0=\frac{1}{\sigma^2}{\mathbf{I}_m}$, where  $\mathbf{I}_m$ is a $m\times{}m$ identity matrix, we obtain 
	\begin{equation}\label{mean_y_given_x-diagonal}
	{\mathbb{E}}[{\mathbf{h}}(k)|{\mathbf{x}}] = g\left(\mathbf{W}_0(k, :)\mathbf{x} + \mathbf{b}_0(k), \sigma\right), 
	\end{equation}
	where ${\mathbf{h}}(k)$ is the $k$-th element of ${\mathbf{h}}$ and ${\mathbf{W}}_0(k,:)$ the $k$-th row of ${\mathbf{W}}_0$ using Matlab notations, and $g(\mu,\sigma)$ is the mean of the univariate truncated normal distribution ${\mathcal{N}}_T(x\left|\mu, \sigma^2\right.)$. The formula of $g(\mu,\sigma)$ is given by \cite{johnson1994continuous}
		\vspace{1mm}
\begin{eqnarray}
g(\mu, \sigma) \triangleq \mu + \sigma \frac{\phi\left(\frac{\mu}{\sigma}\right)}{\Phi\left(\frac{\mu}{\sigma}\right)},\label{truncated_mean_general}
\end{eqnarray}
where ${\phi(z)} \triangleq \frac{1}{\sqrt{2\pi}}\exp^{-\frac{z^2}{2}}$ is the probability density function (PDF) of the standard normal distribution, 	and $\Phi(z) \triangleq \int_{-\infty}^{z}{\phi(t)dt}$ its cumulative distribution function (CDF). 
	Figure \ref{fig:truncated-activation} shows $g(\mu,\sigma)$ as a function of $\mu$, for various values of $\sigma$, alongside $\max(0,\mu)$, which is the activation function used in ReLU neural networks \cite{glorot2011deep}. It is seen that $g(\mu, \sigma)$ is a soft version of $\max(0,\mu)$ and $\lim_{\sigma\to 0}g(\mu, \sigma)=\max(0,\mu)$. 
	\begin{figure}
		\centering
		\includegraphics[width=0.30\textwidth]{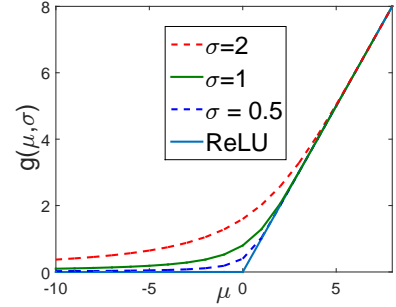}
		\caption{\label{fig:truncated-activation}Visualization of $g(\mu,\sigma)$ as a function of $\mu$, for various values of $\sigma$, in comparison to $\max(0,\mu)$.}
	\end{figure}
	
	\vspace{1mm}
	\section{Nonlinear Regression with TGGMs}\label{reg_training}
	We begin with a nonlinear regression model constructed from a simple TGGM, in which we restrict ${\mathbf{P}}_0$ and ${\mathbf{P}}_1$ to diagonal matrices: $\mathbf{P}_0=\sigma_0^2\mathbf{I}_m$ and $\mathbf{P}_1=\sigma_1^2\mathbf{I}_n$. 	By (\ref{mean_y_given_x-diagonal})-(\ref{truncated_mean_general}) and the arguments there, the $\mathbb{E}(\mathbf{y}|\mathbf{x})$ in such a TGGM implements the output of a soft-version ReLU neural network, which has a single layer of $m$ hidden units with the activation function $g(\cdot,\sigma)$, and uses $\mathbf{W}_0$ and $\mathbf{W}_1$ as the input-to-hidden and hidden-to-output weights, respectively.

	\subsection{Maximum-Likelihood (ML) Parameter Estimation} \label{sec:reg-ML}
	Given a training dataset consisting of the inputs (covariates) ${\mathbf{X}}=[{\mathbf{x}}_1, {\mathbf{x}}_2, \cdots, {\mathbf{x}}_N]$ and the outputs (responses) ${\mathbf{Y}}=[{\mathbf{y}}_1, {\mathbf{y}}_2, \cdots, {\mathbf{y}}_N]$, the log-likelihood function is 
	\begin{eqnarray}\label{eq:likelihood}
	{\mathcal{L}}({\boldsymbol{\Theta}}) \triangleq \ln \int_{0}^{\infty}{p({\mathbf{Y}}, {\mathbf{H}}|{\mathbf{X}}; {\boldsymbol{\Theta}})d{\mathbf{H}}}
	\end{eqnarray}
	where ${\boldsymbol{\Theta}} \triangleq \{{\mathbf{W}}_0, {\mathbf{W}}_1, {\mathbf{b}}_1, {\mathbf{b}}_0\}$, and 
	\begin{eqnarray}\label{eq:complete-likelihood}
	&&\hspace{-1cm}p({\mathbf{Y}}, {\mathbf{H}} \left| {\mathbf{X}}, \boldsymbol{\Theta} \right.) = \mbox{$\prod_{i=1}^N{\mathcal{N}}_T$}\left({\mathbf{h}}_i \left|  {\mathbf{W}}_0{\mathbf{x}}_i+{\mathbf{b}}_0, \sigma_0^2 {\mathbf{I}}_m \right.\right) 
	\cr&&\hspace{1.8cm}\times{\mathcal{N}}\left({\mathbf{y}}_i \left| {\mathbf{W}}_1{\mathbf{h}}_i+{\mathbf{b}}_1, \sigma^2_1{\mathbf{I}}_n\right.\right). 
	\end{eqnarray}
	Let $q(\mathbf{H}|\widetilde{\boldsymbol{\Theta}})$ be an arbitrary PDF with parameters $\widetilde{\boldsymbol{\Theta}}$, defined on $\{\mathbf{H}:\mathbf{H}\geq0\}$. It follows from (\ref{eq:likelihood})
	\begin{eqnarray}\label{eq:lower-bound}
	\hspace{-0cm}{\mathcal{L}}({\boldsymbol{\Theta}}) &=&\ln \int_{0}^{\infty}q(\mathbf{H}|\widetilde{\boldsymbol{\Theta}})\frac{p({\mathbf{Y}}, {\mathbf{H}}|{\mathbf{X}}; {\boldsymbol{\Theta}})}{q(\mathbf{H}|\widetilde{\boldsymbol{\Theta}})}d{\mathbf{H}},
	\cr\hspace{-1cm}&\geq& \int_{0}^{\infty}q(\mathbf{H}|\widetilde{\boldsymbol{\Theta}})\ln \frac{p({\mathbf{Y}}, {\mathbf{H}}|{\mathbf{X}}; {\boldsymbol{\Theta}})}{q(\mathbf{H}|\widetilde{\boldsymbol{\Theta}})}d{\mathbf{H}},
	\cr&&(\textrm{Jensen's Inequality})\nonumber
	\cr\hspace{-1cm}&=&{\mathcal{L}}({\boldsymbol{\Theta}})-\mathrm{KL}(q(\cdot|\widetilde{\boldsymbol{\Theta}})||p(\cdot|{\mathbf{Y}},{\mathbf{X}}; {\boldsymbol{\Theta}})),
	\\\hspace{-1cm}&\triangleq& \mathcal{Q}_{q(\cdot|\widetilde{\boldsymbol{\Theta}})}(\boldsymbol{\Theta}).
	\end{eqnarray}
	where $\mathrm{KL}(q(\cdot)||p(\cdot))$ denotes the Kullback-Leibler (KL) distance, and, if $\exists$ $\mathbf{H}$ such that $q(\mathbf{H})=0$, the limit values are used to lead to $\frac{q(\mathbf{H})}{q(\mathbf{H})}=1$ and $q(\mathbf{H})\ln{}q(\mathbf{H})=0$. 
	In general $q(\cdot)$ is parameterized differently from the TGGM; when $q(\mathbf{H}|\widetilde{\boldsymbol{\Theta}})=p(\mathbf{H}|\mathbf{Y},\mathbf{X},\boldsymbol{\Theta})$, however, we let $q(\cdot)$ use the same parameterization as the TGGM so that  $\widetilde{\boldsymbol{\Theta}}=\boldsymbol{\Theta}$.  In this case, we drop the subscript to  simply write $\mathcal{Q}(\boldsymbol{\Theta}|\widetilde{\boldsymbol{\Theta}})\equiv\mathcal{Q}_{p(\cdot|\mathbf{Y},\mathbf{X},\widetilde{\boldsymbol{\Theta}})}(\boldsymbol{\Theta})$, 
	which, by (\ref{eq:lower-bound}), can be further written as 
	\begin{eqnarray}\label{eq:lower-bound-EM}
	\mathcal{Q}(\boldsymbol{\Theta}|\widetilde{\boldsymbol{\Theta}})\!=\!{\mathcal{L}}({\boldsymbol{\Theta}})\!-\!\mathrm{KL}(p(\cdot|{\mathbf{Y}}\!,{\mathbf{X}}; \widetilde{\boldsymbol{\Theta}})||p(\cdot|{\mathbf{Y}}\!,{\mathbf{X}}; {\boldsymbol{\Theta}})),
	\end{eqnarray}
	From (\ref{eq:lower-bound-EM}) follows the EM algorithm. First, it is clear that $\mathcal{Q}(\boldsymbol{\Theta}|\boldsymbol{\Theta})=\mathcal{L}({\boldsymbol{\Theta}})$. Thus, for a sequence $\{\boldsymbol{\Theta}_{t}\}$ satisfying 
	\begin{eqnarray}\label{eq:E-step}
	\boldsymbol{\Theta}_{t+1}=\mathrm{arg}\max_{\boldsymbol{\Theta}}\mathcal{Q}(\boldsymbol{\Theta}|\boldsymbol{\Theta}_{t}),
	\end{eqnarray}
	one deduces $\mathcal{L}(\boldsymbol{\Theta}_{t})$ $=$ $\mathcal{Q}(\boldsymbol{\Theta}_{t}|\boldsymbol{\Theta}_{t})$ $\leq$ $\mathcal{Q}(\boldsymbol{\Theta}_{t+1}|\boldsymbol{\Theta}_{t})$ $\leq$ $\mathcal{L}(\boldsymbol{\Theta}_{t+1})$, where the last inequality follows from (\ref{eq:lower-bound-EM}). By successively solving (\ref{eq:E-step}), starting from initial $\boldsymbol{\Theta}_{1}$, the EM algorithm produces a sequence $\{\boldsymbol{\Theta}_{t}:t\geq1\}$ that monotonically increases $\mathcal{L}(\boldsymbol{\Theta}_{t})$. 
	To ensure $\mathcal{L}(\boldsymbol{\Theta}_{t+1})>\mathcal{L}(\boldsymbol{\Theta}_{t})$, one only requires $\mathcal{Q}(\boldsymbol{\Theta}_{t+1}|\boldsymbol{\Theta}_{t})$ $>$ $\mathcal{Q}(\boldsymbol{\Theta}_{t}|\boldsymbol{\Theta}_{t})$. Therefore it is not necessary to solve  (\ref{eq:E-step}) completely; rather it is sufficient to perform a single-step gradient ascent from $\boldsymbol{\Theta}_{t}$, 
	\begin{eqnarray}\label{eq:gradient-EM}
	\hspace{-0.65cm}\boldsymbol{\Theta}_{t+1}\!\!\!\!\!\!&=&\!\!\!\!\!\boldsymbol{\Theta}_{t}+\gamma_{t}\nabla_{\boldsymbol{\Theta}}\mathcal{Q}(\boldsymbol{\Theta}|\boldsymbol{\Theta}_{t})\big|_{\boldsymbol{\Theta}=\boldsymbol{\Theta}_{t}},
	\end{eqnarray}
	where $\mathcal{Q}(\boldsymbol{\Theta}|\boldsymbol{\Theta}_{t}\!)\!\!=\!\!\!\!\int_{0}^{\infty}\!\!p({\mathbf{H}}|\mathbf{Y}\!,{\mathbf{X}}; {\boldsymbol{\Theta}}_t\!)\ln{}p({\mathbf{Y}}\!, {\mathbf{H}}|{\mathbf{X}}; {\boldsymbol{\Theta}})d\mathbf{Z}$.
	To find the gradient, it is helpful to write $p({\mathbf{Y}}, {\mathbf{H}}|{\mathbf{X}}; {\boldsymbol{\Theta}}) = \frac{1}{Z({\mathbf{X}}; {\boldsymbol{\Theta}})} e^{-E({\mathbf{Y}}, {\mathbf{H}}|{\mathbf{X}}; {\boldsymbol{\Theta}})}$, where $E = \sum_{i=1}^N \frac{\left\|{\mathbf{h}}_i- {\mathbf{W}}_0{\mathbf{x}}_i\right\|^2}{2\sigma^2_0} + \sum_{i=1}^N \frac{\left\|{\mathbf{y}}_i - {\mathbf{W}}_1{\mathbf{h}}_i \right\|^2}{\sigma^2_1}$ is the energy function and $Z$ the normalization. The gradient can then be expressed as
	\begin{align} \label{derivative_two_layer}
	\nabla_{\boldsymbol{\Theta}}\mathcal{Q} = -{\mathbb{E}}\left[\left. \frac{\partial E}{\partial {\boldsymbol{\Theta}}}\right|{\mathbf{Y}}, {\mathbf{X}} \right] + {\mathbb{E}}\left[\left. \frac{\partial E}{\partial {\boldsymbol{\Theta}}}\right|{\mathbf{X}}\right],
	\end{align}
	where ${\mathbb{E}}[\cdot|{\mathbf{Y}}, {\mathbf{X}} ]$ denotes the expectation with respect to (w.r.t.)  $p({\mathbf{H}}|{\mathbf{X}}, {\mathbf{Y}}; {\boldsymbol{\Theta}}_{t})$, and ${\mathbb{E}}[\cdot|{\mathbf{X}} ]$ the expectation w.r.t. $p({\mathbf{Y}}, {\mathbf{H}}| {\mathbf{X}}; {\boldsymbol{\Theta}}_{t})=p({\mathbf{Y}}|{\mathbf{H}}; {\boldsymbol{\Theta}}_{t})\,p({\mathbf{H}}| {\mathbf{X}}; {\boldsymbol{\Theta}}_{t})$. Specifically, the partial derivatives of $\mathcal{\mathcal{Q}}$ w.r.t. ${\mathbf{W}}_0$ and ${\mathbf{W}}_1$ can respectively be derived as
	\begin{align}
	\frac{\partial {\mathcal{Q}}}{\partial {\mathbf{W}}_0} &= -\frac{1}{\sigma^2_0}\left({\mathbb{E}}[{\mathbf{H}}|{\mathbf{X}} ] - {\mathbb{E}}[{\mathbf{H}}|{\mathbf{Y}}, {\mathbf{X}}]\right){\mathbf{X}}^T,  \label{grad_W0_reg}\\
	\frac{\partial {\mathcal{Q}}}{\partial {\mathbf{W}}_1}  &= -\frac{1}{\sigma^2_1} \Big({\mathbf{W}}_1{\mathbb{E}}\!\left[{\mathbf{H}}{\mathbf{H}}^T|{\mathbf{Y}}, {\mathbf{X}} \right] \nonumber \\
	&\quad\quad\quad\quad\quad - ({\mathbf{Y}} - {\mathbf{b}}_1{\mathbf{1}}^T_N){\mathbb{E}}[{\mathbf{H}}^T|{\mathbf{Y}}, {\mathbf{X}}]\Big),  \label{grad_w1_reg}
	\end{align}
	where ${\mathbf{1}}_N$ is a column vector of ones. The derivatives w.r.t. ${\mathbf{b}}_0$ and ${\mathbf{b}}_1$ can be  derived similarly.
	
	\subsection{ML Estimation versus Backpropagation}\label{sec:reg-ML-vs-BP}
	As mentioned earlier, for a TGGM with diagonal $\mathbf{P}_0$ and $\mathbf{P}_1$, $\mathbb{E}(\mathbf{y}|\mathbf{x})$  implements the output of a soft-version ReLU neural network that use (\ref{truncated_mean_general}) as the activation function at each hidden unit. This suggests one can use back-propagation (BP) to minimize the error between $\mathbb{E}(\mathbf{Y}|\mathbf{X})$ and the training samples of $\mathbf{Y}$, as one does in training a standard ReLU network \cite{glorot2011deep}. 
	
	A popular choice of the error function used by BP is the squared error which, in the case here, is given by ${\mathcal{E}} \triangleq \frac{1}{2\sigma_1^2}\left\|{\mathbf{W}}_1\mathbb{E}(\mathbf{H}|\mathbf{X})+ {\mathbf{b}}_1{\mathbf{1}}_N^T - {\mathbf{Y}} \right\|^2$. Minimization of the squared error is equivalent to maximization of the likelihood under the assumption that $\mathbf{y}|\mathbf{x}\sim\mathcal{N}(\mathbf{y}|\mathbb{E}(\mathbf{y}|\mathbf{x}),\sigma_1^2)$. However, we have shown in (\ref{p_y_x_modul}) that $p(\mathbf{y}|\mathbf{x})$ is a non-Gaussian distribution. Therefore, BP does not maximize the likelihood of the TGGM in the rigorous sense. 
	
	To gain a deeper understanding of the relation between BP and ML estimation, we analyze the update equations of BP and compare them to those of the ML estimator. The BP performs gradient descent of the squared error, with the required partial derivatives given by 
	\begin{align}
	\hspace{-0.2cm}\frac{\partial {\mathcal{E}}}{\partial {\mathbf{W}}_0 }  &\!=\!\! -\! \left(\left({\mathbf{W}}_1^T(\mathbb{E}(\mathbf{Y}|\mathbf{X})\!\! -\! {\mathbf{Y}}) \right) \! \!\odot\! \frac{\mathrm{Var}(\mathbf{H}|\mathbf{X})}{\sigma_0^2} \right)\! {\mathbf{X}}^T \label{grad_E_W0}, 
	\\\hspace{-0.2cm}\frac{\partial {\mathcal{E}}}{\partial {\mathbf{W}}_1}  &= -\frac{1}{\sigma^2_1} \Big({\mathbf{W}}_1\mathbb{E}(\mathbf{H}|\mathbf{X}) \,\mathbb{E}(\mathbf{H}^T|\mathbf{X})  \nonumber \\
	&\hspace{2cm} - ({\mathbf{Y}} - {\mathbf{b}}_1{\mathbf{1}}^T_N)\mathbb{E}(\mathbf{H}^T|\mathbf{X}) \Big), \label{grad_E_W1}
	\end{align}
	where $\odot$ is the Hadamard product and $\mathrm{Var}(\mathbf{H}|\mathbf{X})\triangleq \mathbb{E}\Big[\big(\mathbf{H}-\mathbb{E}(\mathbf{H}|\mathbf{X})\big)\odot\big(\mathbf{H}-\mathbb{E}(\mathbf{H}|\mathbf{X})\big)\Big|\mathbf{X}\Big]$ is a matrix of variances. Comparing \eqref{grad_E_W1} to \eqref{grad_w1_reg}, we can see that the direction of $\frac{\partial {\mathcal{E}}}{\partial {\mathbf{W}}_1 } $ is an approximation to that of $\frac{\partial {\mathcal{Q}}}{\partial {\mathbf{W}}_1 }=-\frac{1}{\sigma^2_1}\!\! \left({\mathbf{W}}_1{\mathbb{E}}\!\left[{\mathbf{H}}{\mathbf{H}}^T|{\mathbf{Y}}, {\mathbf{X}}\right] \!\!-\! ({\mathbf{Y}} \!\!-\! {\mathbf{b}}_1{\mathbf{1}}^T_N){\mathbb{E}}[{\mathbf{H}}^T|{\mathbf{Y}}, {\mathbf{X}}]\right)$ by replacing the posterior expectations ${\mathbb{E}}[{\mathbf{H}}{\mathbf{H}}^T|{\mathbf{Y}}, {\mathbf{X}}]$ and ${\mathbb{E}}[{\mathbf{H}}|{\mathbf{Y}}, {\mathbf{X}}]$ with the corresponding prior expectations ${\mathbb{E}}[{\mathbf{H}}|{\mathbf{X}}]{\mathbb{E}}[{\mathbf{H}}^T|{\mathbf{X}}]$ and ${\mathbb{E}}[{\mathbf{H}}|{\mathbf{X}}]$. Hence, the ML estimator makes a more sufficient use of the available information, in the sense that it takes ${\mathbf{Y}}$ into account while BP does not.
	
	To relate $\frac{\partial {\mathcal{E}}}{\partial {\mathbf{W}}_0 }$ to $\frac{\partial {\mathcal{Q}}}{\partial {\mathbf{W}}_0 }$, we require the  lemma below.
	\begin{lem}\label{lemma}
		Let $\mathbf{U}$ be a matrix of random numbers with ${\mathbf{U}}(:,j)\sim {\mathcal{N}}(\mathbb{E}[\mathbf{H}(:,j)|\mathbf{X}(:,j)], \rho^2)$. If ${\mathbf{Y}}$ are generated according to $\mathbf{y}_j|\mathbf{U}(:,j)\!\sim\!\mathcal{N}(\mathbf{y}_j|{\mathbf{W}}_{\!1}\mathbf{U}(:,j) \!+\! \mathbf{b}_1,\sigma_1^2\mathbf{I})$, $\forall$ $\!j$, then the $\frac{\partial {\mathcal{E}}}{\partial {\mathbf{W}}_0 }$ in \eqref{grad_E_W0} can be equivalently expressed as
		\begin{align}
		\frac{\partial {\mathcal{E}}}{\partial {\mathbf{W}}_0 }  &\!=\! -\! \frac{1}{\rho^2}\Big[\!\left(\left(\sigma^2_1{\mathbf{I}} \!+\! \rho^2{\mathbf{W}}_1^T{\mathbf{W}}_1\! \right)\left({\mathbb{E}}[{\mathbf{H}}|{\mathbf{X}}] \!-\! {\mathbb{E}}[{\mathbf{U}}|{\mathbf{Y}}, {\mathbf{X}}]\right) \right)\nonumber \\
		&\hspace{2cm}\odot \mathrm{Var}(\mathbf{H}|\mathbf{X})/\sigma_0^2\Big] {\mathbf{X}}^T. \label{grad_E_W0_v2}
		\end{align}
	\end{lem}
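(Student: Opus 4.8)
The plan is to recognize that the auxiliary construction in the lemma turns the map from $\mathbf{U}$ to $\mathbf{Y}$ into a \emph{linear Gaussian} system, for which the posterior $p(\mathbf{U}|\mathbf{Y},\mathbf{X})$ is available in closed form, and then to show that the bracketed factor $\mathbf{W}_1^T(\mathbb{E}(\mathbf{Y}|\mathbf{X})-\mathbf{Y})$ appearing in \eqref{grad_E_W0} coincides with the factor $\frac{1}{\rho^2}(\sigma_1^2\mathbf{I}+\rho^2\mathbf{W}_1^T\mathbf{W}_1)(\mathbb{E}[\mathbf{H}|\mathbf{X}]-\mathbb{E}[\mathbf{U}|\mathbf{Y},\mathbf{X}])$ in \eqref{grad_E_W0_v2}. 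Since the remaining operations in both expressions---the Hadamard product with $\mathrm{Var}(\mathbf{H}|\mathbf{X})/\sigma_0^2$, the right-multiplication by $\mathbf{X}^T$, and the scalar prefactor---are identical, establishing this single matrix identity proves the lemma.

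First I would work one column (one training pair) at a time. For fixed $j$, write $\mathbf{m}_j\triangleq\mathbb{E}[\mathbf{H}(:,j)|\mathbf{X}(:,j)]$, so the prior is $\mathbf{U}(:,j)\sim\mathcal{N}(\mathbf{m}_j,\rho^2\mathbf{I})$ and the likelihood is $\mathbf{y}_j|\mathbf{U}(:,j)\sim\mathcal{N}(\mathbf{W}_1\mathbf{U}(:,j)+\mathbf{b}_1,\sigma_1^2\mathbf{I})$. Standard Gaussian conjugacy gives a posterior with precision $\boldsymbol{\Lambda}=\rho^{-2}\mathbf{I}+\sigma_1^{-2}\mathbf{W}_1^T\mathbf{W}_1$ and mean $\boldsymbol{\mu}_j\triangleq\mathbb{E}[\mathbf{U}(:,j)|\mathbf{y}_j,\mathbf{X}(:,j)]$ obeying $\boldsymbol{\Lambda}\boldsymbol{\mu}_j=\rho^{-2}\mathbf{m}_j+\sigma_1^{-2}\mathbf{W}_1^T(\mathbf{y}_j-\mathbf{b}_1)$.

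Next I would compute $\boldsymbol{\Lambda}(\mathbf{m}_j-\boldsymbol{\mu}_j)$. Subtracting the posterior relation from $\boldsymbol{\Lambda}\mathbf{m}_j=\rho^{-2}\mathbf{m}_j+\sigma_1^{-2}\mathbf{W}_1^T\mathbf{W}_1\mathbf{m}_j$ cancels the $\rho^{-2}\mathbf{m}_j$ terms and leaves $\boldsymbol{\Lambda}(\mathbf{m}_j-\boldsymbol{\mu}_j)=\sigma_1^{-2}\mathbf{W}_1^T(\mathbf{W}_1\mathbf{m}_j+\mathbf{b}_1-\mathbf{y}_j)$. The crucial observation is that $\mathbf{W}_1\mathbf{m}_j+\mathbf{b}_1$ is exactly the $j$-th column of $\mathbb{E}(\mathbf{Y}|\mathbf{X})$, by \eqref{mean_y_given_x} together with the fact that the prior mean of $\mathbf{U}$ is $\mathbb{E}[\mathbf{H}|\mathbf{X}]$; hence the right-hand side is $\sigma_1^{-2}\mathbf{W}_1^T(\mathbb{E}(\mathbf{Y}|\mathbf{X})-\mathbf{Y})(:,j)$. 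Multiplying through by $\sigma_1^2$ and using $\sigma_1^2\boldsymbol{\Lambda}=\rho^{-2}(\sigma_1^2\mathbf{I}+\rho^2\mathbf{W}_1^T\mathbf{W}_1)$ yields $\mathbf{W}_1^T(\mathbb{E}(\mathbf{Y}|\mathbf{X})-\mathbf{Y})(:,j)=\rho^{-2}(\sigma_1^2\mathbf{I}+\rho^2\mathbf{W}_1^T\mathbf{W}_1)(\mathbf{m}_j-\boldsymbol{\mu}_j)$; stacking over all columns $j$ produces the desired matrix identity, and substitution into \eqref{grad_E_W0} gives \eqref{grad_E_W0_v2}.

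I expect the main obstacle to be bookkeeping rather than conceptual: keeping the per-column posterior algebra consistent with the stacked matrix quantities $\mathbb{E}[\mathbf{U}|\mathbf{Y},\mathbf{X}]$ and $\mathbb{E}[\mathbf{H}|\mathbf{X}]$, and being careful that $\mathbf{b}_1$ enters as $\mathbf{b}_1\mathbf{1}_N^T$ when columns are collected. One should verify at the outset that $\mathbb{E}[\mathbf{U}|\mathbf{X}]=\mathbb{E}[\mathbf{H}|\mathbf{X}]$---i.e.\ that the auxiliary variable's prior mean is chosen to match the TGGM hidden-unit mean---since this is precisely what makes $\mathbf{W}_1\mathbf{m}_j+\mathbf{b}_1$ the TGGM prediction and lets the $\rho^{-2}\mathbf{m}_j$ terms cancel cleanly. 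The factoring of $1/\rho^2$ out of the Hadamard product is immediate because $\rho^2$ is a scalar, so no additional care is needed there.
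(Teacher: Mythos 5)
Your proposal is correct and follows essentially the same route as the paper's proof: both invoke Gaussian conjugacy for the linear system $\mathbf{U}\to\mathbf{Y}$ to obtain the posterior mean $\mathbb{E}[\mathbf{U}|\mathbf{Y},\mathbf{X}]$, rearrange that relation into the identity $\mathbf{W}_1^T(\mathbb{E}(\mathbf{Y}|\mathbf{X})-\mathbf{Y})=\frac{1}{\rho^2}\left(\sigma_1^2\mathbf{I}+\rho^2\mathbf{W}_1^T\mathbf{W}_1\right)\left(\mathbb{E}[\mathbf{H}|\mathbf{X}]-\mathbb{E}[\mathbf{U}|\mathbf{Y},\mathbf{X}]\right)$, and substitute it into \eqref{grad_E_W0}. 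Your column-wise derivation in precision (natural-parameter) form, rather than the paper's explicit inverse-form posterior mean, is only a cosmetic difference.
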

	\begin{proof}
		Since the prior $p(\mathbf{U})$ and the conditional $p(\mathbf{Y}|\mathbf{U})$ are both Gaussian, the joint distribution $p(\mathbf{Y},\mathbf{U})$ is also Gaussian. As a result, the posterior $p(\mathbf{U}|\mathbf{Y})$ is a Gaussian distribution with the mean  given by ${\mathbb{E}}[{\mathbf{U}}|{\mathbf{Y}}, {\mathbf{X}}] = \left((\sigma^2_1/\rho^2){\mathbf{I}} + {\mathbf{W}}_1^T{\mathbf{W}}_1\right)^{-1}{\mathbf{W}}_1^T ({\mathbf{Y}}-{\mathbb{E}}[{\mathbf{Y}}|{\mathbf{X}}]) + {\mathbb{E}}[{\mathbf{H}}|{\mathbf{X}}]$. It then follows that ${\mathbf{W}}_1^T ({\mathbf{Y}}-{\mathbb{E}}[{\mathbf{Y}}|{\mathbf{X}}]) = \left((\sigma^2_1/\rho^2){\mathbf{I}} + {\mathbf{W}}_1^T{\mathbf{W}}_1\right)\left({\mathbb{E}}[{\mathbf{U}}|{\mathbf{Y}}, {\mathbf{X}}] - {\mathbb{E}}[{\mathbf{H}}|{\mathbf{X}}]\right)$, which is substituted into \eqref{grad_E_W0} to yield \eqref{grad_E_W0_v2}.
	\end{proof}
	As (\ref{grad_E_W0_v2}) holds for any $\rho^2>0$, it is also true when $\rho^2\approx0$, in which case the value of $\rho^2$ has little influence on the direction of $\frac{\partial {\mathcal{E}}}{\partial {\mathbf{W}}_0 }$; Therefore, we can make $\rho^2$ sufficiently small such that $\left(\sigma^2_1{\mathbf{I}}+\rho^2{\mathbf{W}}_1^T{\mathbf{W}}_1\right)\approx\sigma^2_1{\mathbf{I}}$, and consequently $\frac{\partial {\mathcal{E}}}{\partial {\mathbf{W}}_0 } \approx-\frac{\sigma^2_1}{\sigma_0^2\rho^2}\Big[\left(\left({\mathbb{E}}[{\mathbf{H}}|{\mathbf{X}}] \!-\! {\mathbb{E}}[{\mathbf{U}}|{\mathbf{Y}}, {\mathbf{X}}]\right) \right)\odot \mathrm{Var}(\mathbf{H}|\mathbf{X})\Big] {\mathbf{X}}^T$. Comparing the latter equation to \eqref{grad_W0_reg}, we see that the gradients $\frac{\partial\mathcal{E}}{\partial {\mathbf{W}}_0 }$ and $\frac{\partial\mathcal{Q}}{\partial {\mathbf{W}}_0}$ are different in three aspects: (\emph{i}) the ${\mathbb{E}}[{\mathbf{H}}|{\mathbf{Y}}, {\mathbf{X}}]$ in $\frac{\partial\mathcal{Q}}{\partial {\mathbf{W}}_0}$  is replaced by ${\mathbb{E}}[{\mathbf{U}}|{\mathbf{Y}}, {\mathbf{X}}]$ in $\frac{\partial\mathcal{E}}{\partial {\mathbf{W}}_0}$; (\emph{ii}) a new factor  $\mathrm{Var}(\mathbf{H}|\mathbf{X})$ arises in  $\frac{\partial\mathcal{E}}{\partial {\mathbf{W}}_0}$; and (\emph{iii}) the multiplicative constants are different. Since (\emph{iii}) has no influence on the directions of the gradients, we focus on (\emph{i}) and (\emph{ii}). The new factor  $\mathrm{Var}(\mathbf{H}|\mathbf{X})$ in  $\frac{\partial\mathcal{E}}{\partial {\mathbf{W}}_0}$ does not depend on $\mathbf{Y}$, so it plays no direct role in back-propagating the information from the output layer to the input layer. The only term of $\frac{\partial\mathcal{E}}{\partial {\mathbf{W}}_0}$ that contains $\mathbf{Y}$ is ${\mathbb{E}}[{\mathbf{U}}|{\mathbf{Y}}, {\mathbf{X}}]$, which plays the primary and direct role in sending back the information from the output layer when updating the input-to-hidden weights $\mathbf{W}_0$. Since ${\mathbb{E}}[{\mathbf{U}}|{\mathbf{Y}}, {\mathbf{X}}]$ is obtained under the assumption that ${\mathbf{Y}}$ is generated from Gaussian latent variables $\mathbf{U}$, it is clear that the gradient $\frac{\partial\mathcal{E}}{\partial {\mathbf{W}}_0}$ used by BP does not fully reflect the underlying truncated characteristics of ${\mathbf{H}}$ in the TGGM model. On the contrary, ${\mathbb{E}}[{\mathbf{H}}|{\mathbf{Y}}, {\mathbf{X}}]$ is the true posterior mean of $\mathbf{H}$ under the truncation assumption.
	
	In summary, BP uses update rules that are closely related to those of the ML estimator, but it does not fully exploit the available information in updating the TGGM parameters. In particular, BP ignores $\mathbf{Y}$ when it uses $\mathbb{E}(\mathbf{H}|\mathbf{X})$, instead of $\mathbb{E}(\mathbf{H}|\mathbf{Y},\mathbf{X})$, to update $\mathbf{W}_1$; it makes an incorrect assumption about the latent variables when it uses $\mathbb{E}(\mathbf{U}|\mathbf{Y},\mathbf{X})$, instead of $\mathbb{E}(\mathbf{H}|\mathbf{Y},\mathbf{X})$, to update $\mathbf{W}_0$. These somewhat defective update equations are attributed to the fact that BP makes a wrong assumption from the very beginning, i.e., BP assumes $p(\mathbf{y}|\mathbf{x})$ is a Gaussian distribution while the distribution is truly non-Gaussian as shown in (\ref{p_y_x_modul}). For these reasons, BP usually produces worse learning results for a TGGM than the ML estimator, and this will be discussed further in the experiments.
	
	\subsection{Technical Details}\label{sec:technical-details}
	A key step of the ML estimator is calculation of the prior and posterior expectations ${\mathbb{E}}[\cdot|{\mathbf{X}}]$ and ${\mathbb{E}}[\cdot|{\mathbf{Y}}, {\mathbf{X}}]$ in (\ref{grad_W0_reg}) and (\ref{grad_w1_reg}). Since $\mathbf{P}_0=\sigma_0^2\mathbf{I}_m$ is diagonal, the components in $\mathbf{h}_i|\mathbf{x}_i$ are independent; further, the training samples are assumed independent to each other. Therefore $p(\mathbf{H}|\mathbf{X})$ factorizes into a product of univariate truncated normal densities, $p(\mathbf{H}|\mathbf{X})=\prod_{i=1}^N\prod_{k=1}^m\mathcal{N}_T(\mathbf{h}_i(k)|\mathbf{W}_0(k,:)\mathbf{x}_i,\sigma_0^2)$, where each univariate density is associated with a single truncated variable and a particular training sample. Each of these densities has its mean and variance given by  ${\mathbb{E}}[{\mathbf{h}}_i(k)|{\mathbf{x}}_i]   =  g(\mathbf{W}_0(k,:)\mathbf{x}_i+\mathbf{b_0}(k),\sigma_0^2)$ and $\mathrm{Var}[{\mathbf{h}}_i(k)|{\mathbf{x}}_i]   = \omega^2\left(\mathbf{W}_0(k,:)\mathbf{x}_i+\mathbf{b_0}(k),\sigma_0^2\right)$, respectively, where $g(\cdot,\cdot)$ is defined in \eqref{truncated_mean_general} , and  
	\begin{eqnarray}\label{eq:trunc-var-1D}
	\omega^2\left(\mu,\sigma\right) &\triangleq&\sigma^2\left(1-\frac{\mu}{\sigma}\frac{\phi\left(\frac{\mu}{\sigma}\right)}{\Phi\left(\frac{\mu}{\sigma}\right)} - \frac{\phi^2\left(\frac{\mu}{\sigma}\right)}{\Phi^2\left(\frac{\mu}{\sigma}\right)}\right)
	\end{eqnarray}
	is the variance of the truncated normal ${\mathcal{N}}_T(z|\mu, \sigma^2)$ \cite{johnson1994continuous}.  Due to the independences, one can easily compute ${\mathbb{E}}\left[{\mathbf{H}}{\mathbf{H}}^T|{\mathbf{X}}\right] = \sum_{i=1}^N{\mathbb{E}}\left[{\mathbf{h}}_i{\mathbf{h}}_i^T|{\mathbf{x}}_i\right]$, with ${\mathbb{E}}\left[{\mathbf{h}}_i{\mathbf{h}}_i^T|{\mathbf{x}}_i\right] = {\mathbb{E}}[{\mathbf{h}}_i|{\mathbf{x}}_i]{\mathbb{E}}[{\mathbf{h}}_i^T|{\mathbf{x}}_i] + \text{diag}\left(\mathrm{Var}[{\mathbf{h}}_i|{\mathbf{x}}_i]\right)$. 
	
	For the posterior expectation ${\mathbb{E}}[\cdot|{\mathbf{Y}}, {\mathbf{X}}]$, it could be computed by means of numerical integration. Multivariate integrations in normal distributions have been well studied and many effective algorithms have been developed \cite{genz1992numerical, genz2009computation}. 
	Another approach is to use the mean-field variational Bayesian (VB) method \cite{jordan1999introduction,Biship01variational}, which approximates the true posterior $p({\mathbf{H}}|{\mathbf{Y}}, {\mathbf{X}})$ with a factorized distribution $q({\mathbf{H}}|\widetilde{\boldsymbol{\Theta}}) = \prod_{i=1}^N\prod_{k=1}^m q({\mathbf{h}}_i(k)|\widetilde{\boldsymbol{\Theta}})$, parameterized by $\widetilde{\boldsymbol{\Theta}}$. The approximate posterior is found by minimizing $\mathrm{KL}\left(q(\mathbf{H}|\widetilde{\boldsymbol{\Theta}})||p(\mathbf{H}|{\mathbf{Y}}, {\mathbf{X}}; {\boldsymbol{\Theta}})\right)$, or maximizing the lower bound $\mathcal{Q}_{q(\cdot|\widetilde{\boldsymbol{\Theta}})}(\boldsymbol{\Theta})$, as shown in (\ref{eq:lower-bound}). 
	
	As $\mathbf{h}_i$ is independent of $\mathbf{h}_j$, $\forall$ $i,j$, given $\mathbf{Y}$ and $\mathbf{X}$,  the KL distance can be equivalently expressed as $\sum_{i=1}^N\mathrm{KL}\left(q(\mathbf{h}_i|\widetilde{\boldsymbol{\Theta}})||p(\mathbf{h}_i|{\mathbf{Y}}, {\mathbf{X}}; {\boldsymbol{\Theta}})\right)$ and each term in the sum can be minimized independently. Given $\{q({\mathbf{h}}_j(\ell)):\ell\neq{}k\}$, the $i$th term of the KL distance is minimized by
	\begin{equation} \label{q_hki_general_form}
	q({\mathbf{h}}_i(k)|\widetilde{\boldsymbol{\Theta}}) \propto e^{\left\langle \ln p({\mathbf{y}}_i, {\mathbf{h}}_i| {\mathbf{x}}_i) \right\rangle_{-k}},
	\end{equation}
	where $p({\mathbf{y}}_i, {\mathbf{h}}_i| {\mathbf{x}}_i) = {\mathcal{N}}_T\left({\mathbf{h}}_i \left|  {\mathbf{W}}_0{\mathbf{x}}_i+{\mathbf{b}}_0, \sigma_0^2 {\mathbf{I}}_m \right.\right) \times{\mathcal{N}}\left({\mathbf{y}}_i \left| {\mathbf{W}}_1{\mathbf{h}}_i+{\mathbf{b}}_1, \sigma^2_1{\mathbf{I}}_n\right.\right)$ and $\left\langle\cdot \right\rangle_{-k}$ denotes the expectation w.r.t. $\prod_{\ell\ne k} q({\mathbf{h}}_j(\ell))$. From \eqref{q_hki_general_form}, one obtains 
	\begin{equation}
	q\left( {\mathbf{h}}_i(k) |\widetilde{\boldsymbol{\Theta}}\right) \!=\! {\mathcal{N}}_T\left({\mathbf{h}}_i(k) \left| {\boldsymbol{\xi}}_i(k), \frac{1}{{\mathbf{P}}(k,k)} \right.\right), \label{q_update}
	\end{equation}
	where ${\mathbf{P}} \triangleq \frac{1}{\sigma^2_0}{\mathbf{I}}_m + \frac{1}{\sigma^2_1}{\mathbf{W}}_1^T{\mathbf{W}}_1$, ${\boldsymbol{\xi}}_i$ is a vector with its $k$-th element defined as $
	{\boldsymbol{\xi}}_i(k) = \frac{{\boldsymbol{\gamma}}_i(k) - {\mathbf{\tilde P}}(k,-k)\left\langle{\mathbf{h}}_i(-k)\right\rangle_{-k}}{{\mathbf{P}}(k,k)}$, ${\boldsymbol{\gamma}}_i \triangleq \frac{1}{\sigma^2_0}\left({\mathbf{W}}_0{\mathbf{x}}_i \!+\! {\mathbf{b}}_0\right) + \frac{1}{\sigma^2_1}{\mathbf{W}}_1^T({\mathbf{y}}_i \!-\! {\mathbf{b}}_1)$, ${\mathbf{\tilde P}} \triangleq {\mathbf{P}} - \text{diag}({\mathbf{P}})$, ${\mathbf{P}}(k,-k)$ is the $k$-th row of ${\mathbf{P}}$ with its $k$-th element deleted, and  ${\mathbf{h}}_i(-k)$ is the subvector of ${\mathbf{h}}_i$ missing the $k$-th element.
	
	The KL distance $\mathrm{KL}\left(q(\mathbf{h}_i|\widetilde{\boldsymbol{\Theta}})||p(\mathbf{h}_i|{\mathbf{Y}}, {\mathbf{X}}; {\boldsymbol{\Theta}})\right)$ monotonically decreases as one cyclically computes (\ref{q_update}) through $k=1,2,\cdots,m$. One shall perform enough cycles until the KL distance converges. Upon convergence, $q({\mathbf{h}}_i(k)|\widetilde{\boldsymbol{\Theta}})$ is used as the best approximation to $p(\mathbf{h}_i|{\mathbf{Y}}, {\mathbf{X}}; {\boldsymbol{\Theta}})$, $\forall$ $i,k$, and their means and variances, as given by the formulae in (\ref{truncated_mean_general}) and (\ref{eq:trunc-var-1D}), are used to compute the posterior expectations ${\mathbb{E}}[\cdot|{\mathbf{Y}}, {\mathbf{X}}]$ in (\ref{grad_W0_reg}) and (\ref{grad_w1_reg}).
	
	After (\ref{grad_W0_reg})-(\ref{grad_w1_reg}) are computed and the TGGM parameters in $\boldsymbol{\Theta}$ are improved based on the gradient ascent in (\ref{eq:gradient-EM}) , one iteration of the ML estimator is completed. Given the updated $\boldsymbol{\Theta}$, one then repeat the cycles with (\ref{q_update}) to find the approximate posteriors and again make another update of $\boldsymbol{\Theta}$, and so on. The complete ML estimation algorithm is summarized in Algorithm 1, where $T_1$ represents the number of cycles with (\ref{q_update}) to find the best posterior distribution $q({\mathbf{H}}|\widetilde{\boldsymbol{\Theta}})$ for each newly-updated ${\boldsymbol{\Theta}}$. We can see that the complexity mainly comes from the estimation of expectation ${\mathbb{E}}[{\mathbf{h}}_i|{\mathbf{y}}_i, {\mathbf{x}}_i]$, which is  ${\mathcal{O}}(T_1M^2)$.
	
	\begin{algorithm}
		\begin{algorithmic}[1] 
			\caption{ML Estimator for TGGM Regression}
			\label{alg:alg_train_regression}
			\STATE Randomly initialize the model parameters ${\boldsymbol{\Theta}}$;
			\REPEAT
			\FOR{$t=1$ {\textbf{to}} $T_1$}
			\FOR{$k=1$ \textbf{to} $M$}
			\STATE Update ${\mathbb{E}}[{\mathbf{h}}_i(k)|{\mathbf{y}}_i, {\mathbf{x}}_i]$ using (\ref{truncated_mean_general});
			\STATE Replace the $k$-th value of ${\mathbb{E}}[{\mathbf{h}}_i|{\mathbf{y}}_i, {\mathbf{x}}_i]$ with ${\mathbb{E}}[{\mathbf{h}}_i(k)|{\mathbf{y}}_i, {\mathbf{x}}_i]$;
			\ENDFOR
			\ENDFOR
			\STATE Compute ${\mathbb{E}}[{\mathbf{h}}_i{\mathbf{h}}_i^T|{\mathbf{y}}_i, {\mathbf{x}}_i]$ using (\ref{truncated_mean_general}) and (\ref{eq:trunc-var-1D});
			\STATE Calculate the gradients of log-likelihood using \eqref{grad_W0_reg} and \eqref{grad_w1_reg};
			\STATE Update model parameters ${\boldsymbol{\Theta}}$ with gradient ascend;
			\UNTIL{Convergence of log-likelihood}
		\end{algorithmic}
	\end{algorithm}
	
	Finally, it should be noted that the expectations require frequent calculation of the ratio of $\frac{\phi(a)}{\Phi(a)}$. In practice, if it is computed directly, we easily encounter two issues. First, repeated computation of the integration $\Phi(a) = \int_{-\infty}^{a}{\frac{1}{\sqrt{2\pi}} e^{-\frac{z^2}{2}} dz}$ is a waste of time; second, when $a$ is small, e.g. $a \le-37$,  the CDF $\Phi(a)$ and the PDF $\phi(a)$ are so tiny that a double-precision floating number can no longer represent them accurately. If we compute them with the double-precision numbers, we easily encounter the issue of $\frac{0}{0}$. Fortunately, both these issues  can be solved by using a lookup  table. Specifically, we pre-compute $\frac{\phi(a)}{\Phi(a)}$ at densely-sampled discrete values of $a$ using high-accuracy computation, such as the symbolic calculation in Matlab, and store the results in a table. When we need  $\frac{\phi(b)}{\Phi(b)}$ for any $b$, we look for two values of $a$ that are closest to $b$ and use the interpolation of the two $\frac{\phi(a)}{\Phi(a)}$ to estimate $\frac{\phi(b)}{\Phi(b)}$.
	
	
	\section{Extension to Other Learning Tasks}

	\subsection{Nonlinear Classification}\label{sec:classification}
	Let $c\in\{1,\cdots,n\}$ denote $n$ possible classes. Let $\mathbf{T}_{c}\in\mathbb{R}^{(n-1)\times{}n}$ be a class-dependent matrix obtained from $-\mathbf{I}_n$ by setting the $c$-th column to one and deleting the $c$-th row \cite{liao2007quadratically}. We define a nonlinear classifier as 
	\begin{align}
	\hspace{-0.2cm}p(c) &= \int_{0}^{\infty}\int_{0}^{\infty}{\mathcal{N}}\left(\mathbf{z}\left|\mathbf{T}_c({\mathbf{W}}_1\mathbf{h}+{\mathbf{b}}_1), \mathbf{T}_c\mathbf{T}_c^T\right.\right)d\mathbf{z}
	\cr&\hspace{1cm}\times{\mathcal{N}}_T\left(\mathbf{h}\left|  \mathbf{W}_0\mathbf{x}+\mathbf{b}_0, \sigma_0^2 {\mathbf{I}}_m \right.\right)d\mathbf{h}. \label{class_hi}
	\end{align}
	The inner integral is due to the multinomial probit model \cite{albert1993bayesian} which transforms the TGGM's output vector $\mathbf{y}$ in (\ref{joint_pdf_y_h_x}) into a class label according to  $c=\mathrm{arg}\max_k\mathbf{y}(k)=\mathrm{arg}\max_k\mathbb{I}(\mathbf{T}_{k}\mathbf{y}\geq\mathbf{0})$. Therefore, $p(c)=p(\mathbf{T}_{c}\mathbf{y}\geq\mathbf{0})=\int_{\mathbf{T}_{c}\mathbf{y}\geq\mathbf{0}}{\mathcal{N}}\left(\mathbf{y}\left|{\mathbf{W}}_1\mathbf{h}+{\mathbf{b}}_1, \mathbf{I}_n\right.\right)d\mathbf{y}$. A change of variables $\mathbf{z}\triangleq\mathbf{T}_{c}\mathbf{y}$ leads to $p(c)=\int_{0}^{\infty}{\mathcal{N}}\left(\mathbf{z}\left|\mathbf{T}_c({\mathbf{W}}_1\mathbf{h}+{\mathbf{b}}_1), \mathbf{T}_c\mathbf{T}_c^T\right.\right)d\mathbf{z}$. 
	
	The model described by \eqref{class_hi} can be trained by an ML estimator, similarly to the case of regression, with the main difference being the additional latent vector $\mathbf{z}$, which can be treated in a similar way as $\mathbf{h}$. The posterior $p(\mathbf{z},\mathbf{h}|\mathbf{x},c)$ is still a truncated Gaussian distribution, whose moments can be computed using the methods in Section \ref{reg_training}.  
	The model predicts the class label of $\mathbf{x}$ using the rule $\hat c = \arg \max \limits_{k} {\mathbb{E}}[{\mathbf{y}}(k)|{\mathbf{x}}]$, where ${\mathbb{E}}[{\mathbf{y}}|{\mathbf{x}}]={\mathbf{W}}_1{\mathbb{E}}[{\mathbf{h}}|{\mathbf{x}}] + {\mathbf{b}}_1$ and ${\mathbb{E}}[{\mathbf{h}}(k)|{\mathbf{x}}_i] = g\left({\mathbf{W}}_0(k,:){\mathbf{x}} + {\mathbf{b}}_0(k), \sigma_0\right)$.
	
	\subsection{Deep Learning}\label{sec:deep-learning}
	The TGGM defined in (\ref{joint_pdf_y_h_x}) can be viewed as a neural network, where the input, hidden, and output layers are respectively constituted by $\mathbf{y}$, $\mathbf{h}$, and $\mathbf{x}$, and the hidden layer has outgoing connections to the output layer and incoming connections from the input layer. The topology of the hidden layer is determined by $\mathbf{P}_0$. So far, we have focused on $\mathbf{P}_0=\sigma_0^2\mathbf{I}_m$ , which defines a single layer of hidden nodes that are not interconnected. By using a more sophisticated $\mathbf{P}_0$, we can construct a deep TGGM with two or more hidden layers and enhanced representational versatility. 
	
	
	As an example, we let ${\mathbf{h}}\!\triangleq\! [{\mathbf{h}}^{(1)}; {\mathbf{h}}^{(2)}]$ and define 
	$p({\mathbf{h}}|\mathbf{x}) 
	\propto\exp\{ -\frac{1}{2\sigma^{(1)2}_0}\|{\mathbf{h}}^{(1)} - {\mathbf{W}}_0^{(1)}{\mathbf{x}}-{\mathbf{b}}_0^{(1)}\|^2 \} 
	\times \exp\{ -\frac{1}{2\sigma^{(2)2}_0} \|{\mathbf{h}}^{(2)} \! -{\mathbf{W}}_0^{(2)}{\mathbf{h}}^{(1)} - {\mathbf{b}}_0^{(2)} \|^2 \} \mathbb{I}\left({\mathbf{h}} \ge {\mathbf{0}} \right)$.
	Taking into account the normalization, the distribution can be written as $p({\mathbf{h}}) = {\mathcal{N}}_T\left({\mathbf{h}}\left|{\boldsymbol{\zeta}}, \mathbf{P}_0^{-1}\right.\right)$, where $\boldsymbol{\zeta}$ and $\mathbf{P}_0$ depend on $\{\mathbf{W}_0^{(t)},\mathbf{b}_0^{(t)},\sigma_0^{(t)2}\}_{t=1}^2$. This distribution, along with $p({\mathbf{y}}|\mathbf{h}^{(2)}) = {\mathcal{N}}({\mathbf{y}} | {\mathbf{W}}_1{\mathbf{h}}^{(2)} +{\mathbf{b}}_1, \sigma^2_1{\mathbf{I}}_n)$, yields a TGGM with two hidden layers. 
	Extensions to three or more hidden layers and to classification can be constructed similarly. A deep TGGM can be learned by using EM to maximize the likelihood, wherein the derivatives of the lower bound $\mathcal{Q}$ can be represented as $\frac{\partial {\mathcal{Q}} }{\partial {\boldsymbol{\Theta}}} = -{\mathbb{E}}\left[\frac{\partial E}{\partial {\boldsymbol{\Theta}}}| {\mathbf{Y}}, {\mathbf{X}} \right] + {\mathbb{E}}\left[\frac{\partial E}{\partial {\boldsymbol{\Theta}}}| {\mathbf{X}} \right]$, as in Section \ref{sec:reg-ML}. 
	
	Training a multi-hidden-layer TGGM is almost the same as training a single-hidden-layer TGGM, except for the difference in estimating the prior expectation ${\mathbb{E}}[\frac{\partial E}{\partial {\boldsymbol{\Theta}}}|{\mathbf{X}}]$. In the single-hidden-layer case, since $\mathbf{P}_0$ is diagonal, $p({\mathbf{h}}_i| {\mathbf{x}}_i; {\boldsymbol{\Theta}})$ factorizes into a set of univariate truncated normals, and therefore the expectation can be computed efficiently. In the multi-hidden-layer case, however, $p({\mathbf{h}}_i| {\mathbf{x}}_i; {\boldsymbol{\Theta}})$ is a multivariate truncated normal, and thus the prior expectation is as difficult to compute as the posterior expectation. Following Section \ref{sec:technical-details}, we use mean-filed VB to approximate 
	$p({\mathbf{h}}_i|{\mathbf{x}}_i, {\boldsymbol{\Theta}})$ by factorized univariate truncated normals and estimate ${\mathbb{E}}[\frac{\partial E}{\partial {\boldsymbol{\Theta}}}|{\mathbf{X}}]$ with the univariate distributions. 
	
	In practice, we find that starting from the VB approximation of $p({\mathbf{h}}_i|\mathbf{y}_i,{\mathbf{x}}_i, {\boldsymbol{\Theta}})$ can improve the VB approximation of $p({\mathbf{h}}_i|{\mathbf{x}}_i, {\boldsymbol{\Theta}})$,  a feature similar to that observed in the contrastive divergence (CD) \cite{hinton2002training}. 
	
	\begin{table*} {\tiny } 
		\centering
		\caption{Averaged Test RMSE and Std. Errors}
		\vspace{2.25mm}
		\begin{tabular}{l c c c c c c}
			Dataset & N & d & ReLU-BP & ReLU-PBP  & TGGM-BP  & TGGM-ML \\ \hline
			Boston Housing & 506 & 13 & 3.228$\pm$0.1951 &  3.014$\pm$ 0.1800 &   2.927 $\pm$ 0.2910 & \textbf{2.820$\pm$ 0.2565} \\
			Concrete Strength & 1030 & 8 & 5.977$\pm$ 0.0933 & 5.667$\pm$ 0.0933& 5.657 $\pm$ 0.2685 & \textbf{5.395$\pm$ 0.2404} \\
			Energy Efficiency & 768 & 8 & 1.098 $\pm$ 0.0738 & 1.804 $\pm$ 0.0481 &  \textbf{1.029 $\pm$ 0.1206}  &1.244 $\pm$ 0.0979 \\
			Kin8nm & 8192 & 8 & 0.091$\pm$ 0.0015 & 0.098$\pm$ 0.0007 &  0.088 $\pm$ 0.0025  &\textbf{0.083 $\pm$ 0.0034}    \\
			Naval Propulsion & 11934 & 16 & 0.001$\pm$ 0.0001 & 0.006$\pm$ 0.0000 &  \textbf{0.00057$\pm$ 0.0001}  &0.003 $\pm$ 0.0002 \\
			Cycle Power Plant & 9568 & 4 & {4.182$\pm$ 0.0402} & 4.124$\pm$ 0.0345 &  \textbf{3.949 $\pm$ 0.1478}  &4.183 $\pm$ 0.0955 \\
			Protein Structure & 45730 & 9 & 4.539$\pm$ 0.0288 & 4.732$\pm$ 0.0130 &  4.477$\pm$ 0.0483   &\textbf{4.431 $\pm$ 0.0292} \\
			Wine Quality Red & 1599 & 11 & 0.645$\pm$ 0.0098 & 0.635$\pm$0.0079&   0.640 $\pm$ 0.0469  &\textbf{0.625 $\pm$ 0.0340}\\
			Yacht Hydrodynamic & 308 & 6 & 1.182$\pm$ 0.1645 & 1.015$\pm$ 0.0542 & 0.957 $\pm$ 0.2319  &\textbf{0.841 $\pm$ 0.2028} \\
			Year Prediction MSD & 515,345 & 90 & 8.932 $\pm$ N/A & \textbf{8.878 $\pm$ N/A}    & 8.918 $\pm$ N/A  & 9.002 $\pm$ N/A \\ \hline
		\end{tabular}\label{regression_result}
	\end{table*}
	
	\section{Experiments}\label{sec:experiments}
	We report the performance of the proposed TGGM models on publicly available data sets, in comparison to competing models. In all experiments below, RMSProp \cite{tieleman2012lecture} is applied to update the model parameters by using the current estimated gradients, with RMSprop delay set to be $0.95$.

	\textbf{Regression} The root mean square error (RMSE), averaged over multiple trials of splitting each data set into training and testing subsets, is used as a performance measure to evaluate the TGGM against the ReLU neural network. The  results reported in \cite{hernandez2015probabilistic} are used as the reference to the performances of ReLU neural networks. The comparison is based on the same data and same training/testing protocols in \cite{hernandez2015probabilistic}, by using a consistent setting for the TGGM as follows: a single hidden layer is used in the TGGM for all data sets, with $100$ hidden nodes used for Protein Structure and Year Prediction MSD, the two largest data sets, and $50$ hidden nodes used for the other data sets. 
	
	Two methods, BP and ML estimation, are applied to train each TGGM, resulting in two versions of the TGGM for each data set, referred to TGGM-BP and TGGM-ML, respectively. For both training methods, ${\boldsymbol{\Theta}}$ is initialized as Gaussian random numbers, with each component a random draw from ${\mathcal{N}}(0, 0.01)$.  To speed up, each gradient update uses a mini-batch of training samples, resulting in stochastic gradient search. The batch size is $100$ for the two largest data sets and $50$ for the others. For ML estimation, the number of cycles used by mean-field VB is set to $10$, and $\sigma_1^2=\sigma_0^2=0.5$. 
	
	
	The testing RMSE's of the TGGM are summarized in Table \ref{regression_result}, alongside the corresponding results  \cite{hernandez2015probabilistic} for the ReLU neural networks trained by BP and probabilistic backpropagation (PBP). BP provides a point estimate of the model parameters while PBP provides the posterior distribution. It is seen from Table \ref{regression_result} that TGGM-BP performs slightly better than ReLU on most data sets. The gain may be attributed to the soft activation function $g(\mu, \sigma)$ which provides the freedom in choosing appropriate $\sigma_0$ according to data's characteristics, unlike ReLU which fixes $\sigma$ to $0$. The nonzero slope in $g(\mu,\sigma)$ for $\mu<0$ may be another contributing factor, as it has been shown in \cite{he2015delving} that replacing the zero part of ReLU with a sloping line leads to better results. 
	Furthermore, it can be observed that TGGM-ML outperforms TGGM-BP on most data sets. This is because ML-based training fully exploits the flexibilities provided by a probabilistic model and, as discussed in Section \ref{sec:reg-ML-vs-BP}, is more accurate in reflecting the underlying model, in contrast to TGGM-BP which made a wrong assumption about the model at the very beginning.
	
	
	We also observe that, if $\sigma_0^2$ is set close to $0$, the performance of TGGM-BP approaches that of the ReLU neural network. This is not surprising since  $g(\mu, \sigma_0)$ approaches the ReLU activation function as $\sigma^2_0\to 0$. As we increase the value of $\sigma_0^2$, the TGGM's performance improves gradually, until it reaches a saturating value. This is reasonable because $g(\mu, \sigma_0)$ becomes more linearly (w.r.t. $\mu$) as $\sigma^2_0$ becomes larger, which weakens its nonlinear representational abilities. Empirically, we find that TGGM-BP performs similarly within an appropriate range of $\sigma^2_0$. The results in Table \ref{regression_result} are based on $\sigma_0^2 = 0.01$, which is found to be a good setting for all data sets. The impact of $\sigma^2_0$ on the RMSE results is illustrated in Fig. \ref{TGGM-BP_sigma}. Note that $\sigma^2_0$ can also be learned directly from the data, which is an interesting future work.
	
	We found that the optimal $\sigma_0^2$ for TGGM-ML is typically larger than that for TGGM-BP. This is perhaps because a larger $\sigma_0^2$ provides increased flexibility in the TGGM, which can be exploited by a probabilistic inference method like expectation-maximization. As a result, we use $\sigma_0^2=0.5$ for TGGM-ML in the experiments.
	
	\begin{figure}
		\centering
		\includegraphics[width=0.30\textwidth]{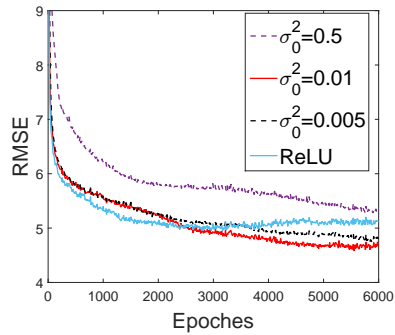}\\
		\caption{Illustration of the impact of $\sigma_0^2$ on TGGM-BP, based on the RMSEs for Concrete Strength in a single trial.}
		\label{TGGM-BP_sigma}
		\vspace{-2mm}
	\end{figure}

	\textbf{Classification}  Three public benchmark data sets are considered for this task: MNIST, 20 NewsGroups, and Blog. The MNIST data set includes 60,000 training images and 10,000 testing images of handwritten digits of zero through ten. The 20 NewsGroups data sets is composed of 18,845 documents, written with a vocabulary of 2,000 words, from 20 different groups, with the data partitioned into a training set of 11,315 documents  and a testing set of 7,531 documents \cite{li2016shape}. The Blog data set contains 13,245 documents, written with 17,292 words, about the US presidential elections; the data are partitioned into 7,832 training documents and 5,413 testing documents \cite{chen2015differential}. One-hidden-layer and two-hidden-layer TGGM models are considered, with each hidden layer containing 100 or 200 nodes. Similar to the regression model, the TGGM classifier is trained by both BP and ML, with the resulting models termed as TGGM-BP and TGGM-ML, respectively.
	
	The models are randomly initialized with Gaussian random numbers drawn from ${\mathcal{N}}(0, 0.01)$. The step-size for gradient ascent is chosen from $[10^{-4},  5\times10^{-3}]$ by maximizing the accuracy on a cross-validation set. The TGGMs use a minibatch size of 500 for MNIST and 200 for the other two data sets, while the ReLU uses 100 for all data sets. Variance parameters $\{\sigma^2_0, \sigma^2_1\}$ are set to $0.5$ for TGGM-ML and $0.01$ for TGGM-BP,  in both one and two-layer models. When ML estimation is applied, the number of VB cycles is initially set to 30 and then gradually decreases to 5. The data sets are also used to train and test a ReLU neural network implemented in Caffe \cite{jia2014caffe}, to produce the competing results for comparison. From Table \ref{classfication_result}, it can be seen that TGGM-BP generally outperforms ReLU on the two document data sets and maintains a comparable performance on the image data set, for both one- and two- hidden-layer models. It is  further observed that TGGM-ML has the best performance on all three data sets, with the best performance achieved by the two-hidden-layer models on MNIST and Blog, and by the one-hidden-layer model on 20 NewsGroup. 

	\begin{table} [t!]
		\centering
		\caption{Test Accuracy of Classification}
		\vspace{2.5mm}
		\begin{tabular}{l c  c c}
			Methods & MNIST  &  20 News & Blog\\ \hline
			ReLU (100) &  97.58\%  &   72.8\%  & 65.86\%  \\
			ReLU (200) &  97.89\% &  73.27\% & 67.02\%  \\
			ReLU (100-100) &  97.83\% & 69.94\% & 67.93\% \\
			ReLU (200-200) & 98.04\% & 69.91\% & 65.07\% \\ \hline
			TGGM-BP (100) &  97.52\%  &   73.65\%  & 67.50\% \\
			TGGM-BP (200) &  97.56\%  &  73.62\%     &67.52\%  \\
			TGGM-BP (100-100) & 97.76\% & 71.06\% & 66.82\% \\
			TGGM-BP (200-200)  & 98.12\% & 71.18\% & 67.73\% \\ \hline
			TGGM-ML (100)     & 97.75\% &  \textbf{73.74\%} & 69.83\% \\
			TGGM-ML (200)     & 97.97\%  & 73.38\% & 69.75\% \\
			TGGM-ML (100-100) & 98.05\% & 68.01\% & \textbf{69.89}\%\\
			TGGM-ML(200-200) & \textbf{98.31}\% & 67.52\%  & 66.64\% \\   \hline
		\end{tabular} \label{classfication_result}
	\end{table}


	\section{Conclusions}
	We have proposed a nonlinear statistical learning framework termed TGGM. By introducing truncated latent variables into the traditional GGM, we obtain the TGGM as a non-Gaussian nonlinear model with significantly enhanced modeling ability compared to the GGM. We demonstrate that regression and classification can be realized through appropriately constructed TGGMs. With carefully designed graphical structures, deep versions of TGGMs have also been obtained. It is shown that, for regression and classification, TGGMs can be approximately viewed as a deterministic neural network with an activation function similar to ReLU. Because of this, TGGMs can be trained with BP. However, BP does not exactly maximize the likelihood of a TGGM, due to the inherent Gaussian assumption it makes. To overcome this limitation, we have developed an algorithm to correctly maximize the likelihood under the truncated Gaussian assumption. Experimental results show that the TGGM trained by BP generally performs better than the ReLU network, indicating the advantage of the new activation function. It is further shown that the TGGM trained by ML learning achieves the best performance on most data sets in consideration. It should be emphasized that the tasks considered in this paper are only specific applications of the TGGM framework under special forms of the precision matrices. In the future, we will consider TGGMs with lateral connections between hidden nodes. We may also generalize the supervised TGGM to the unsupervised case, using constructs similar to RBMs. Moreover, investigation of how the quality of uncertainty estimates affects the performance is also of interest. 
	
	\clearpage
		\section*{Acknowledgements}
		The authors would like to thank the anonymous reviewers for their valuable and constructive comments. This research was supported in part by ARO, DARPA, DOE, NGA and ONR.
	
	\bibliography{reference} 
	\bibliographystyle{icml2016}
	
	\newpage
	\appendix
	
		\twocolumn[
		\begin{center}
			\bf{\Large Supplementary of ``Nonlinear Statistical Learning with Truncated Gaussian Graphical Models''\\ \vspace{7mm}}
		\end{center}
		]
		
		\section{Training TGGM for Classification}
		
		Similar to the regression model, the derivatives of ${\mathcal{Q}}(\cdot)$ can be derived as
		\begin{align}
			&\frac{\partial {\mathcal{Q}}}{\partial {\mathbf{W}}_0} \!=\! -\frac{1}{\sigma^2_0}\left({\mathbb{E}}[{\mathbf{H}}|{\mathbf{X}} ] - {\mathbb{E}}[{\mathbf{H}}|{\mathbf{Y}}, {\mathbf{X}}]\right){\mathbf{X}}^T,  \label{grad_W0_reg}\\
			&\frac{\partial {\mathcal{Q}}}{\partial {\mathbf{b}}_0} \!=\! -\frac{1}{\sigma^2_0}\left({\mathbb{E}}[{\mathbf{H}}|{\mathbf{X}} ] - {\mathbb{E}}[{\mathbf{H}}|{\mathbf{Y}}, {\mathbf{X}}]\right){\mathbf{1}}_N,  \\
			&\frac{\partial {\mathcal{Q}}}{\partial {\mathbf{W}}_1}  \!=\! -\Big({\mathbf{W}}_1{\mathbb{E}}\!\left[{\mathbf{H}}{\mathbf{H}}^T|{\mathbf{Y}}, {\mathbf{X}}\right]  \nonumber \\
			&\hspace{17mm} - ({\mathbb{E}}\left[{\mathbf{Z}}|{\mathbf{Y}}, {\mathbf{X}}\right] \!-\! {\mathbf{b}}_1{\mathbf{1}}^T_N){\mathbb{E}}[{\mathbf{H}}^T|{\mathbf{Y}}, {\mathbf{X}}]\Big), \\
			&\frac{\partial {\mathcal{Q}}}{\partial {\mathbf{b}}_1} \!\!=\! - (N{\mathbf{b}}_1 \!\!-\! ({\mathbb{E}}\!\left[{\mathbf{Z}}|{\mathbf{Y}}, {\mathbf{X}}\right] \!-\!\! {\mathbf{W}}_1{\mathbb{E}}[{\mathbf{H}}|{\mathbf{Y}}, {\mathbf{X}}]){\mathbf{1}}_N)
		\end{align}
		where ${\mathbf{Z}}\triangleq [{\mathbf{z}}_1, {\mathbf{z}}_2, \cdots, {\mathbf{z}}_N]$. With the gradients, we can update the model parameters ${\boldsymbol{\Theta}}$ using appropriate optimization algorithms, such as SGD and its variants.
		
		The prior expectation ${\mathbb{E}}[\cdot|{\mathbf{X}}]$ can be computed easily due to $p({\mathbf{H}}|{\mathbf{X}})$ comprising of univariate truncated normals \cite{johnson1994continuous}. For the posterior expectation ${\mathbb{E}}[\cdot|{\mathbf{Y}}, {\mathbf{X}}]$, we resort to the mean-field VB approximation. Define ${\mathbf{S}}=[{\mathbf{s}}_1, {\mathbf{s}}_2, \cdots, {\mathbf{s}}_N]$ with ${\mathbf{s}}_i \triangleq  {\mathbf{T}}_i{\mathbf{z}}_i$. Suppose a fully factorized distribution $q({\mathbf{H}}, {\mathbf{S}})= \prod_{i=1}^N \prod_{k=1}^K q({\mathbf{h}}_i(k)) q({\mathbf{s}}_i(k))$. Then, we minimize the KL-divergence between $q({\mathbf{H}}, {\mathbf{S}})$ and the true posterior $
		p({\mathbf{Y}},{\mathbf{S}}, {\mathbf{H}}| {\mathbf{X}}) = \prod_{i=1}^N {\mathcal{N}}_T\left({\mathbf{h}}_i \left|  {\mathbf{W}}_0{\mathbf{x}}_i+{\mathbf{b}}_0, \sigma_0^2 {\mathbf{I}}_M \right.\right) \times {\mathcal{N}}\! \left({\mathbf{s}}_i \! \left| {\mathbf{T}}_i ({\mathbf{W}}_1{\mathbf{h}}_i \!+\! {\mathbf{b}}_1), {\mathbf{T}}_i{\mathbf{T}}_i^T \right. \!\right) \times \prod_{k\ne y_i} I({\mathbf{s}}_i(k) \ge 0)
		$, with the KL-divergence expressed as
		\begin{align}
			KL &\!=\!\! -\!\sum_{i=1}^N \frac{1}{2\sigma^2_0} \! \left\langle\left\|{\mathbf{h}}_i \!-\! {\mathbf{W}}_0{\mathbf{x}}_i - {\mathbf{b}}_0\right\|^2 \right\rangle_q \!\! + \left\langle I({\mathbf{h}}_i\ge {\mathbf{0}}) \right\rangle_q \nonumber \\
			&\;\; -\!\sum_{i=1}^N \ln Z_i  \!-\! \sum_{i=1}^N  \frac{1}{2} \left\langle \left\|{\mathbf{T}}_i^{-1} {\mathbf{s}}_i \!-\! {\mathbf{W}}_1{\mathbf{h}}_i \!-\! {\mathbf{b}}_1\right\|^2 \right\rangle_q \nonumber \\
			& \;\; \!+\! \sum_{i=1}^N \!\sum_{k\ne y_i} \!\! \left\langle\ln\left({\mathbf{s}}_i(k) \! \ge 0 \right) \right\rangle_q   \! \!-\! \frac{MN\! \ln 2\pi}{2}\!  \!+\! {\mathcal{H}}(q),
		\end{align}
		where $\left\langle\cdot\right\rangle$ means expectation taken w.r.t. $q({\mathbf{H}}, {\mathbf{S}})$; and ${\mathcal{H}}(q)$ is the entropy of $q({\mathbf{H}}, {\mathbf{S}})$. For convenience of presentation, denote ${\mathbf{v}}_i=[{\mathbf{h}}_i^T, {\mathbf{s}}_i^T]^T$ and ${\mathbf{V}}=[{\mathbf{v}}_1, {\mathbf{v}}_2, \cdots, {\mathbf{v}}_N]$. Thereby, $q({\mathbf{H}}, {\mathbf{S}})$ can now be denoted as $q({\mathbf{V}})$. It is known that when all $q({\mathbf{v}}_s(\ell))$ except $(\ell, s) = (k, i)$ are known, the KL-divergence is minimized if $\ln q({\mathbf{v}}_i(k)) = \left\langle \ln p({\mathbf{y}}_i, {\mathbf{v}}_i| {\mathbf{x}}_i) \right\rangle_{\ne (k,i)} + const$ \cite{jordan1999introduction}. Following the similar procedures and arrangements in regression, it can be obtained that
		\begin{align}
			&\ln p({\mathbf{y}}_i, {\mathbf{v}}_i| {\mathbf{x}}_i)  \nonumber \\
			&\;\;=-\frac{1}{2}{\mathbf{P}}_i(k,k){\mathbf{v}}^2_i(k) + \!\!\!\!\! \sum_{k\ne M+y_i} \!\!\!\!\! \ln I({\mathbf{v}}_i(k)\ge 0) + C_3 \nonumber \\
			&\quad\;\;\; + \left({\boldsymbol{\gamma}}_i(k) - {\mathbf{P}}_i(k,-k){\mathbf{v}}_i(-k) \right){\mathbf{v}}_i(k),
		\end{align}
		where $C_3$ represents all terms without reliance on ${\mathbf{v}}_i(k)$; and
		\begin{align}
			{\mathbf{P}}_i & \triangleq  \left[ {\begin{array}{*{20}{c}}
					{\frac{1}{\sigma^2_0}{\mathbf{I}}_M + {\mathbf{W}}_1^T{\mathbf{W}}_1}&{-{\mathbf{W}}_1^T}\\
					{-{\mathbf{W}}_1}&{\left({\mathbf{T}}_i{\mathbf{T}}_i^T\right)^{-1}},
				\end{array}} \right], \\
				{\boldsymbol{\gamma}}_i & \triangleq \left[ {\begin{array}{*{20}{c}}
						{\frac{1}{\sigma_0^2}({\mathbf{W}}_0{\mathbf{x}}_i+{\mathbf{b}}_0)-{\mathbf{W}}_1^T{\mathbf{b}}_1}\\
						{({\mathbf{T}}_i^{-1})^T{\mathbf{b}}_1}.
					\end{array}} \right]. 
				\end{align}
				From the fact $\ln q({\mathbf{v}}_i(k)) = \left\langle \ln p({\mathbf{y}}_i, {\mathbf{v}}_i| {\mathbf{x}}_i) \right\rangle_{\ne (k,i)} + const$, it can be derived that
				\begin{align} \label{post_pdf_class}
					q({\mathbf{v}}_i(k)) \!=\!\! \left\{ \begin{array}{l}\!\!
						{\mathcal{N}}_T\!\left({\mathbf{v}}_i(k)\left|{\boldsymbol{\varsigma}}_i(k), \frac{1}{{\mathbf{P}}_i(k,k)}\right. \!\right), \;\; \text{if}\; k \!\ne\! M \!+\! y_i,
						\\
						\!\! {\mathcal{N}}\!\left({\mathbf{v}}_i(k)\left|{\boldsymbol{\varsigma}}_i(k), \frac{1}{{\mathbf{P}}_i(k,k)}\right.\!\right), \;\;\;\; \text{otherwise},
					\end{array} \right.
				\end{align}
				where ${\boldsymbol{\varsigma}}_i$ is defined as ${\boldsymbol{\varsigma}}_i(k) = \frac{{\boldsymbol{\gamma}}_i(k) - {\mathbf{\tilde P}}_i(k,:)\left\langle{\mathbf{v}}_i\right\rangle_q}{{\mathbf{P}}_i(k,k)}$ with ${\mathbf{\tilde P}}_i = {\mathbf{P}}_i - \text{diag}({\mathbf{P}}_{i})$. From the distribution $q({\mathbf{v}}_i(k))$, the expectation ${\mathbb{E}}[{\mathbf{v}}_i(k)|{\mathbf{y}}_i, {\mathbf{x}}_i]$ and variance $\text{Var}[{\mathbf{v}}_i(k)|{\mathbf{y}}_i, {\mathbf{x}}_i]$ using the truncated normal properties. With the fact ${\mathbf{v}}_i = [{\mathbf{h}}_i^T, {\mathbf{s}}_i^T]^T$, the expectations ${\mathbb{E}}[{\mathbf{H}}|{\mathbf{Y}}, {\mathbf{X}} ]$, ${\mathbb{E}}[{\mathbf{H}}{\mathbf{H}}|{\mathbf{Y}}, {\mathbf{X}} ]$ and ${\mathbb{E}}[{\mathbf{S}}|{\mathbf{Y}}, {\mathbf{X}}]$ required in the gradient computation can be obtained directly. For ${\mathbf{s}}_i\triangleq {\mathbf{T}}_i{\mathbf{z}}_i$, we have ${\mathbb{E}}\left[{\mathbf{z}}_i|{\mathbf{y}}_i, {\mathbf{x}}_i\right] = {\mathbf{T}}_i^{-1}{\mathbb{E}}\left[{\mathbf{s}}_i|{\mathbf{y}}_i, {\mathbf{x}}_i\right]$, and thus ${\mathbb{E}}[{\mathbf{Z}}|{\mathbf{Y}}, {\mathbf{X}}]$ can be computed easily.
				
				\section{Training Deep TGGM}
				The training algorithms for deep regression and classification TGGMs are almost the same, thus we only present that for classification only. With similar transformation in single layer model, we can represent deep classification TGGM as 
				$p({\mathbf{Y}}, {\mathbf{S}}, {\mathbf{H}}|{\mathbf{X}}; {\boldsymbol{\Theta}})=\prod_{i=1}^N p\left({\mathbf{h}}_i \left|  {\mathbf{x}}_i \right.\right)  \times {\mathcal{N}}\! \left({\mathbf{s}}_i \! \left| {\mathbf{T}}_i ({\mathbf{W}}_2{\mathbf{h}}_i \!+\! {\mathbf{b}}_2), {\mathbf{T}}_i{\mathbf{T}}_i^T \right. \!\right) 
				\times \prod_{k\ne y_i} I({\mathbf{s}}_i(k) \ge 0) $, where $p({\mathbf{h}}_i|{\mathbf{x}}_i)$ is truncated normal distribution defined as $p({\mathbf{h}}_i|{\mathbf{x}}_i) \triangleq \frac{1}{Z_i} \exp\{ -\frac{\|{\mathbf{h}}_i^{(1)} \!-\! {\mathbf{W}}_0{\mathbf{x}}_i \!-\! {\mathbf{b}}_0\|^2}{2\sigma^2_0} -\frac{\|{\mathbf{h}}_i^{(2)} \!-\! {\mathbf{W}}_1{\mathbf{h}}_i^{(1)} \!-\! {\mathbf{b}}_1\|^2}{2\sigma^2_0} \}I\left\{{\mathbf{h}}_i \!\ge\! {\mathbf{0}} \right\}$ with ${\mathbf{h}}_i \triangleq [{\mathbf{h}}_i^{(1)T}, {\mathbf{h}}_i^{(2)T}]^T$ and ${\mathbf{H}} \triangleq [{\mathbf{h}}_1, {\mathbf{h}}_2, \cdots, {\mathbf{h}}_N]$. The lengths of ${\mathbf{h}}_i^{(1)}$ and ${\mathbf{h}}_i^{(2)}$ are denoted as $M_1$ and $M_2$, respectively. It can be seen that the whole model is very closely related to TGGM and preserves most of the properties of truncated normal, thus can be trained efficiently similar to above models. The derivatives of ${\mathcal{Q}}$ can be derived as
				\begin{align}
					\frac{\partial {\mathcal{Q}}}{\partial {\mathbf{W}}_0} &\!=\! -\frac{1}{\sigma^2_0} \Big( {\mathbb{E}} [{\mathbf{H}}^{(1)}|{\mathbf{X}}]  -{\mathbb{E}} [{\mathbf{H}}^{(1)}|{\mathbf{Y}}, {\mathbf{X}}]\Big) {\mathbf{X}}^T;   \\
					\frac{\partial {\mathcal{Q}}}{\partial {\mathbf{b}}_0} &\!=\! -\frac{1}{\sigma^2_0} \Big( {\mathbb{E}} [{\mathbf{H}}^{(1)}|{\mathbf{X}}]  -{\mathbb{E}} [{\mathbf{H}}^{(1)}|{\mathbf{Y}}, {\mathbf{X}}]\Big){\mathbf{1}}_N;\\
					\frac{\partial {\mathcal{Q}}}{\partial {\mathbf{W}}_1} &\!=\!\! -\frac{1}{\sigma^2_0} \Big( {\mathbf{b}}_1{\mathbf{1}}^T_N({\mathbb{E}}[{\mathbf{H}}^{(1)T}|{\mathbf{Y}}, {\mathbf{X}}]- {\mathbb{E}}[{\mathbf{H}}^{(1)T}|{\mathbf{X}}]) \nonumber \\
					&\; + {\mathbf{W}}_1\!({\mathbb{E}}[{\mathbf{H}}^{(1)}{\mathbf{H}}^{(1)T}|{\mathbf{Y}}, {\mathbf{X}}] \!-\! {\mathbb{E}}[{\mathbf{H}}^{(1)}{\mathbf{H}}^{(1)T}|{\mathbf{X}}])\nonumber \\
					&\; - {\mathbb{E}}[{\mathbf{H}}^{(2)}{\mathbf{H}}^{(1)T}|{\mathbf{Y}}, {\mathbf{X}}] \!+\! {\mathbb{E}}[{\mathbf{H}}^{(2)}{\mathbf{H}}^{(1)T}|{\mathbf{X}}]\Big)\\
					\frac{\partial {\mathcal{Q}}}{\partial {\mathbf{b}}_1} &\!=\! -\frac{1}{\sigma^2_0}\Big( {\mathbf{W}}_1({\mathbb{E}}[{\mathbf{H}}^{(1)}|{\mathbf{Y}}, {\mathbf{X}}]-{\mathbb{E}}[{\mathbf{H}}^{(1)}|{\mathbf{X}}]){\mathbf{1}}_N \nonumber \\
					&\quad - ({\mathbb{E}}[{\mathbf{H}}^{(2)}|{\mathbf{Y}}, {\mathbf{X}}] -{\mathbb{E}}[{\mathbf{H}}^{(2)}|{\mathbf{X}}]){\mathbf{1}}_N \Big); \\
					\frac{\partial {\mathcal{Q}}}{\partial {\mathbf{W}}_2} &\!=\!\! -{\mathbf{W}}_2{\mathbb{E}}[{\mathbf{H}}^{(2)}{\mathbf{H}}^{(2)T}|{\mathbf{Y}}, {\mathbf{X}}] \!+\! {\mathbb{E}}[ {\mathbf{Z}} {\mathbf{H}}^{(2)T}|{\mathbf{Y}}, {\mathbf{X}}] \nonumber \\
					&\quad - {\mathbf{b}}_2{\mathbf{1}}_N^T{\mathbb{E}}[{\mathbf{H}}^{(2)T}|{\mathbf{Y}}, {\mathbf{X}}]; \label{derivative_L_W2} \\
					\frac{\partial {\mathcal{Q}}}{\partial {\mathbf{b}}_2} &\!=\!\! -{\mathbf{W}}_2{\mathbb{E}}[{\mathbf{H}}^{(2)}|{\mathbf{Y}}, {\mathbf{X}}]{\mathbf{1}}_N \!-\! N{\mathbf{b}}_2 \!+\! {\mathbb{E}}[{\mathbf{Z}}|{\mathbf{Y}}, {\mathbf{X}}]{\mathbf{1}}_N,
				\end{align}
				where ${\mathbf{H}}^{(\ell)} \triangleq [{\mathbf{h}}_1^{(\ell)}, {\mathbf{h}}_2^{(\ell)}, \cdots, {\mathbf{h}}_N^{(\ell)}]$ for $\ell=1,2$. With the gradients, we can update the model parameters ${\boldsymbol{\Theta}}$ using appropriate optimization algorithms, such as SGD and its variants.
				
				In deep models, since the prior is also a multivariate truncated normal, it is expensive to compute the prior expectation ${\mathbb{E}}[\cdot|{\mathbf{X}}]$ analytically as that in one-layer case.  For the efficiency of training, we resort to mean-field VB for the estimation of both prior and posterior expectations ${\mathbb{E}}[\cdot|{\mathbf{X}}]$ and ${\mathbb{E}}[\cdot|{\mathbf{Y}}, {\mathbf{X}}]$. The prior distribution $p({\mathbf{h}}_i|{\mathbf{x}}_i)$ can be equivalently written as
				\begin{align}
					p({\mathbf{h}}_i|{\mathbf{x}}_i) = {\mathcal{N}}_T\left({\mathbf{h}}_i| {\mathbf{Q}}^{-1}{\boldsymbol{\beta}}_i, {\mathbf{Q}}^{-1}\right),
				\end{align}
				where 
				\begin{align}
					{\mathbf{Q}} & \triangleq \frac{1}{\sigma^2_0}\left[ {\begin{array}{*{20}{c}}
							{\mathbf{I}}_{M_1}+{\mathbf{W}}_1^T{\mathbf{W}}_1 & -{\mathbf{W}}_1^T\\
							-{\mathbf{W}}_1 & {\mathbf{I}}_{M_2}
						\end{array}} \right],  \\
						{\boldsymbol{\beta}}_i & \triangleq \frac{1}{\sigma^2_0}\left[ {\begin{array}{*{20}{c}}
								{\mathbf{W}}_0{\mathbf{x}}_i+{\mathbf{b}}_0-{\mathbf{W}}^T_1{\mathbf{b}}_1\\
								{\mathbf{b}}_1
							\end{array}} \right].
						\end{align}
						Suppose a fully factorized distribution $q({\mathbf{h}}_i)=\prod_{k=1}^{M} q({\mathbf{h}}_i(k))$ with $M \triangleq M_1+M_2$. We now minimize the KL-divergence between $q({\mathbf{h}}_i)$ and the true posterior $p({\mathbf{h}}_i|{\mathbf{x}}_i)$. It is known that when all $q({\mathbf{h}}_s(\ell))$ except $(\ell, s) = (k, i)$ are known, the KL-divergence is minimized if $\ln q({\mathbf{h}}_i(k)) = \left\langle \ln p({\mathbf{h}}_i| {\mathbf{x}}_i) \right\rangle_{\ne (k,i)} + const$. By rearranging the terms in $\ln p({\mathbf{h}}_i|{\mathbf{x}}_i)$, it can be easily obtained that $p({\mathbf{h}}_i|{\mathbf{x}}_i) = -\frac{1}{2}{\mathbf{Q}}(k,k){\mathbf{h}}_i^2(k) + ({\boldsymbol{\beta}}_i(k)-{\mathbf{Q}}(k,-k){\mathbf{h}}_i(-k)){\mathbf{h}}_i(k) + \ln I({\mathbf{h}}_i(k)\ge 0) + C_4$. Thus, we have
						\begin{equation}
							q({\mathbf{h}}_i(k)) \!=\! {\mathcal{N}}_T \!\left(\! {\mathbf{h}}_i(k)\left| \frac{{\boldsymbol{\beta}}_i(k) - {\mathbf{ Q}}_0(k,:)\left\langle {\mathbf{h}}_i\right\rangle_q}{{\mathbf{Q}}(k,k)}, \frac{1}{{\mathbf{Q}}(k,k)} \right.\!\right),
						\end{equation}
						where ${\mathbf{Q}}_0 \triangleq {\mathbf{Q}} - \text{diag}({\mathbf{Q}})$. From the univariate truncated normal $q({\mathbf{h}}_i)$ , we can estimate the prior expectation ${\mathbb{E}}[{\mathbf{H}} |{\mathbf{X}}]$ easily.
						
						For posterior expectation ${\mathbb{E}}[\cdot|{\mathbf{Y}}, {\mathbf{X}}]$, we also suppose a fully factorized distribution $q({\mathbf{v}}_i)$ with ${\mathbf{v}}_i = [{\mathbf{h}}_i^T, {\mathbf{s}}_i^T]^T$ and then minimize the KL-divergence. First, we express the log-likelihood as
						\begin{align}
							&\ln p({\mathbf{y}}_i, {\mathbf{v}}_i|{\mathbf{x}}_i) \nonumber \\
							&\;\; = -\frac{1}{2}{\mathbf{P}}_i(k,k){\mathbf{v}}_i^2(k) +\!\! \sum\limits_{k\ne M+y_i} \ln I({\mathbf{v}}_i(k) \ge 0) \nonumber \\
							&\quad\;\; + ({\boldsymbol{\gamma}}_i(k) - {\mathbf{P}}_i(k,-k){\mathbf{v}}_i(-k)){\mathbf{v}}_i(k) + C_4,
						\end{align}
						where in deep models ${\mathbf{P}}_i$ and ${\boldsymbol{\gamma}}_i$ are defined as
						\begin{align} \label{P_deep_class}
							{\mathbf{P}}_i &\!=\!  \frac{1}{\sigma^2_0} \!\! \left[\!\! {\begin{array}{*{20}{c}}
									{\mathbf{I}}_{M_1} \!\!+\!\! {\mathbf{W}}_1^T{\mathbf{W}}_1 & -{\mathbf{W}}_1^T & {\mathbf{0}}\\
									-{\mathbf{W}}_1 & {\mathbf{I}}_{M_2} \!\!+\!\! {\mathbf{W}}_2^T{\mathbf{W}}_2 & -{\mathbf{W}}_2^T{\mathbf{T}}_i^{-1}\\
									0 & -{\mathbf{T}}_i^{-1T}{\mathbf{W}}_2 & ({\mathbf{T}}_i{\mathbf{T}}_i^T)^{-1}
								\end{array}} \!\!\! \right]; \\
								{\boldsymbol{\gamma}}_i &\!=\! \frac{1}{\sigma^2_0}\left[ {\begin{array}{*{20}{c}}
										{\mathbf{W}}_0{\mathbf{x}}_i+{\mathbf{b}}_0-{\mathbf{W}}_1^T{\mathbf{b}}_1\\
										{\mathbf{b}}_1-{\mathbf{W}}_2^T{\mathbf{b}}_2\\
										{\mathbf{T}}_i^{-1T}{\mathbf{b}}_2
									\end{array}} \right].
								\end{align}
								Then, it can be known that the KL-divergence is minimized with $q({\mathbf{v}}_i(k))$ being the same form as \eqref{post_pdf_class}. The only difference are the expressions of precision matrix ${\mathbf{P}}_i$ and linear vector ${\boldsymbol{\gamma}}_i$. With the factorized truncated normal distribution, the posterior expectations ${\mathbb{E}}[{\mathbf{H}}|{\mathbf{Y}}, {\mathbf{X}}]$ and covariance ${\mathbb{E}}[{\mathbf{H}}{\mathbf{H}}^T|{\mathbf{Y}}, {\mathbf{X}}]$ can be estimated easily using truncated normal properties.
	
\end{document}